  \providecommand\BibTeX{{%
    \normalfont B\kern-0.5em{\scshape i\kern-0.25em b}\kern-0.8em\TeX}}}
\begin{document}

\title{Multi-Path Policy Optimization}  

\author{Ling Pan}
\affiliation{%
  \institution{IIIS, Tsinghua University}
    \city{Beijing}
  	\country{China}
}
\email{pl17@mails.tsinghua.edu.cn}
\author{Qingpeng Cai}
\affiliation{%
  \institution{Alibaba Group}
  \city{Beijing}
  \country{China}
}
\email{qingpeng.cqp@alibaba-inc.com}
\author{Longbo Huang}
\affiliation{%
  \institution{IIIS, Tsinghua University}
    \city{Beijing}
    \country{China}
}
\email{longbohuang@tsinghua.edu.cn}

\begin{abstract}
Recent years have witnessed a tremendous improvement of deep reinforcement learning. However, a challenging problem is that an agent may suffer from inefficient exploration, particularly for on-policy methods. Previous exploration methods either rely on complex structure to estimate the novelty of states, or incur sensitive hyper-parameters causing instability. We propose an efficient exploration method, Multi-Path Policy Optimization (MPPO), which does not incur high computation cost and ensures stability. MPPO maintains an efficient mechanism that effectively utilizes a population of diverse policies to enable better exploration, especially in sparse environments. We also give a theoretical guarantee of the stable performance. We build our scheme upon two widely-adopted on-policy methods, the Trust-Region Policy Optimization algorithm and Proximal Policy Optimization algorithm. We conduct extensive experiments on several MuJoCo tasks and their sparsified variants to fairly evaluate the proposed method. Results show that MPPO significantly outperforms state-of-the-art exploration methods in terms of both sample efficiency and final performance.
\end{abstract}

\keywords{Deep reinforcement learning; Policy optimization} 

\maketitle

\section{Introduction}
In reinforcement learning, an agent seeks to find an optimal policy that maximizes long-term rewards by interacting with an unknown environment.
Policy-based methods, e.g., \cite{lillicrap2015continuous,fujimoto2018addressing,mnih2016asynchronous}, optimize parameterized policies by gradient ascent on the performance objective.
Directly optimizing the policy by vanilla policy gradient methods may incur large policy changes, which can result in performance collapse due to unlimited updates.
To resolve this issue, Trust Region Policy Optimization (TRPO) \cite{schulman2015trust} and Proximal Policy Optimization (PPO) \cite{schulman2017proximal} optimize a surrogate function in a conservative way, both being on-policy methods that perform policy updates based on samples collected by the current policy.
These on-policy methods have the desired feature that they generally achieve stable performance.
In contrast,  off-policy learning, where policies are updated according to samples drawn from a different policy, e.g., using an experience replay buffer \cite{haarnoja2018soft}, can often suffer from practical convergence and stability issue \cite{gu2016q,gu2017interpolated}.
However, as on-policy methods learn from what they collect, they can particularly suffer from insufficient exploration ability, especially in sparse environments \cite{colas2018gep}.
Thus, although TRPO and PPO start from a stochastic policy, the randomness in the policy decreases quickly during training. 
As a result, they can converge too prematurely to bad local optima in high-dimensional or sparse reward tasks.

Indeed, how to achieve efficient exploration is challenging in deep reinforcement learning.
There has been recent progress in improving exploration ranging from count-based exploration \cite{ostrovski2017count,tang2017exploration,fu2017ex2}, intrinsic motivation \cite{houthooft2016vime,bellemare2016unifying,pathak2017curiosity}, to noisy networks \cite{fortunato2018noisy,plappert2018parameter}.
However, these methods either introduce sensitive parameters that require careful tuning on tasks \cite{khadka2018evolution}, or require learning additional complex structures to estimate the novelty of states.
Another line of research \cite{hong2018diversity,masood2019diversity} proposes to encourage exploration by augmenting the objective function with a diversity term that measures the distance of current and prior policies. 
Yet, the distance between the current policy and past polices can be small for trust-region methods where the policy update is controlled, and limits the applicability of the diversity-driven approaches.

Vanilla evolutionary algorithms \cite{spears1993overview}, e.g., population-based methods \cite{zames1981genetic}, on the other hand, exhibit great exploration ability and stability with the maintenance  of a population of agents, which are able to collect diverse samples \cite{khadka2018evolution,chang2018genetic}.
However, evolutionary algorithms are sample-inefficient compared with deep reinforcement learning approaches \cite{colas2018gep}, as they learn from the whole episode instead of single steps \cite{sigaud2019policy}, and does not exploit the powerful gradient information \cite{khadka2018evolution}.

Recent research has shown great potential in combining deep reinforcement learning with population-based methods to reap the best from both families of algorithms \cite{jaderberg2017population,khadka2018evolution,khadka2019collaborative,pourchot2018cem,leibo2019malthusian}, which mainly focuses on off-policy learning.
These approaches generally maintains a population of agents, and each of them interacts with the environment to collect experiences.
The key to success is that these diverse experiences are stored in a shared experience replay buffer, which can be utilized by off-policy algorithms.
However, it is sample-inefficient to apply population-based methods to on-policy learning.
This is due to the nature of on-policy methods, where the policy can only be updated by samples collected by itself.
Therefore, rolling out all policies in the population at each iteration as in \cite{khadka2018evolution,khadka2019collaborative,pourchot2018cem} can be inefficient as each policy cannot exploit other policies' experiences.
In addition, each interaction with the environment can be expensive \cite{buckman2018sample}. 

To enable an effective and efficient combination of on-policy reinforcement learning algorithms and population-based methods, in this paper, we propose a novel method, Multi-Path Policy Optimization (MPPO), which improves exploration for on-policy algorithms using multiple paths.
Here, a path refers to a sequence of policies generated during the course of policy optimization starting from a single policy.
Figure \ref{mppo_framwork} demonstrates the high-level schematic of MPPO, which has four main components, i.e., {\em pick and rollout}, {\em value function approximation}, {\em policy optimization}, and {\em policy buffer update}.
Specifically, MPPO starts with $K$ different policies randomly initialized in the policy buffer, and a shared value network.
At each iteration, a candidate policy is picked from the policy buffer according to a picking rule, defined as a weighted combination of performance and entropy, introduced to enable a trade-off between the exploration and exploitation.
Then, the picked policy interacts with the environment by rollouts to collect samples.
The shared value network is updated based on these samples to approximate the value function.
The picked policy is updated by policy optimization according to the samples and the shared value network.
Finally, the improved picked policy updates the policy buffer by replacing itself, in order to retain the diversity of the policy buffer.

\begin{figure}
\centering
\includegraphics[scale=0.295]{./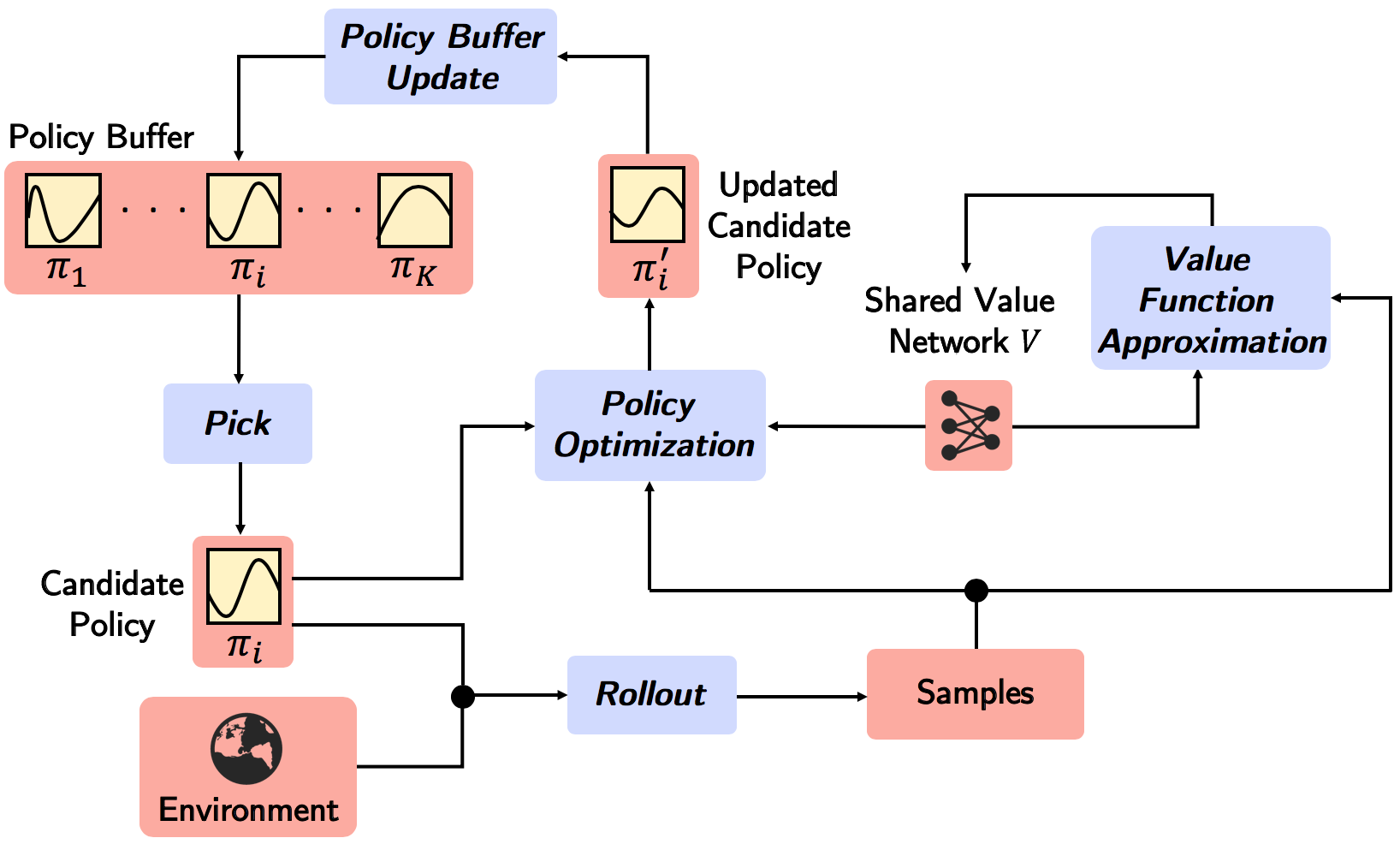}
\caption{High-level schematic of MPPO.}
\label{mppo_framwork}
\end{figure}

With this scheme, MPPO maintains $K$ policy paths, which increases the exploration ability during training.
Different policy paths provide diverse experiences for the shared value network to enable a better estimation \cite{nachum2017trust}, which yields a better signal for telling how well each state is.
With a better estimated value function, the picked updated by policy optimization are more able to collect trajectories with higher rewards.
Therefore, MPPO can provide better guidance for the picked policy, which MPPO aims to optimize, to explore states and actions that were not known to have high rewards previously.
Moreover, since only one candidate policy is picked and optimized at each iteration, our method does not incur much computational cost compared with the base policy optimization method.

A critical component of MPPO is the picking rule, which favors to select the policy that is most desirable to rollout and to optimize at each iteration, i.e., the one with good performance while being explorative simultaneously.
We prove that when MPPO switches to an explorative policy, the performance variation of picked polices can be bounded and controlled. 
This is a useful feature that ensures smooth policy transition.
We also empirically validate that the potential variation is small, and the picked policy converges to one single policy, which ensures the stability.

We apply MPPO to two widely adopted on-policy algorithms, TRPO \cite{schulman2015trust} and PPO \cite{schulman2017proximal}, and conduct extensive experiments on several continuous control tasks based MuJoCo \cite{todorov2012mujoco}.
Experimental results demonstrate that our proposed algorithms, MP-TRPO and MP-PPO, provide significant improvements over state-of-the-art exploration methods, in terms of sample efficiency and final performance without incurring high computational cost.
We also analyze the effect of each component in our methodology and investigate the critical advantages of the proposed picking rule and policy buffer update strategy.

The main contributions can be summarized as follows:
\begin{itemize}
\item We propose a novel methodology utilizing a population of policies to tackle the exploration bottleneck of on-policy reinforcement learning algorithms, which is efficient and effective without introducing much computation costs.
\item We give a theoretical guarantee of stable performance of Multi-Path TRPO (MP-TRPO).
\item MPPO can be readily applied given any baseline on-policy algorithm. We validate MPPO to two popular on-policy algorithms, TRPO and PPO, and conduct extensive evaluation on a wide range of MuJoCo tasks. Results show that MPPO outperforms state-of-the-art exploration methods.
\end{itemize}

\section{Preliminaries}
A Markov decision process (MDP) is defined by $(\mathcal{S}, \mathcal{A}, p, r, \gamma)$,
where $\mathcal{S}$, $\mathcal{A}$ denote the set of states and actions, $p(s'|s, a)$ the transition probability from state $s$ to state $s'$ under action $a$, $r(s, a)$ the corresponding immediate reward, and $\gamma \in [0, 1)$ the discount factor.
The agent interacts with the environment by its parameterized policy $\pi_{\theta}$, with the goal to learn the optimal policy that maximizes the expected discounted return $J(\pi_\theta) = \mathbb{E}[\sum_{t=0}^{\infty} \gamma^t r_t|\pi_{\theta}]$.

Trust Region Policy Optimization (TRPO) \cite{schulman2015trust} learns the policy parameter by optimizing a surrogate function in a conservative way.
Specifically, it limits the stepsize towards updating the policy using a trust-region constraint, i.e.,
\begin{align}
\max_{\theta}  \quad & \mathcal{L}_{\pi_{\theta_{\rm{old}}}}(\pi_\theta) = {\mathbb{E}}_t \left[ \frac{\pi_{\theta}(a_t|s_t)}{\pi_{\theta_{\rm{old}}}(a_t|s
_t)} A_t^{\pi_{\theta_{\rm{old}}}}(s_t,a_t) \right] \label{trpo_obj}\\
{\rm s.t.} \quad & {\mathbb{E}}_t \left[ D_{KL} \left( \pi_{\theta}(\cdot|s_t) || \pi_{\theta_{\rm{old}}}(\cdot | s_t) \right) \right] \leq \delta,
\label{eq:trpo_program}
\end{align}
where ${\mathbb{E}}_t[...]$ is the empirical average over a finite batch of samples,
$A_t^{\pi_{\theta_{\rm{old}}}}(s_t,a_t)=Q_t^{\pi_{\theta_{\rm{old}}}}(s_t,a_t)-V_t^{\pi_{\theta_{\rm{old}}}}(s_t)$, and
$Q_t^{\pi_{\theta_{\rm{old}}}}(s_t,a_t) = r(s_t, a_t) + \gamma \mathbb{E}_{s_{t+1}}\left[V_t^{\pi_{\theta_{\rm{old}}}}(s_{t+1}) \right]$.
One desired feature of TRPO is that it guarantees a monotonic policy improvement, i.e., the policy update step leads to a better-performing policy during training.
However, it is not computationally efficient as it involves solving a second-order optimization problem using conjugate gradient.

Proximal Policy Optimization (PPO) \cite{schulman2017proximal} is a simpler method only involving first-order optimization using stochastic gradient descent.
PPO maximizes a KL-penalized or clipped version of the objective function to ensure stable policy updates, where the clipped version is more common and is reported to perform better than the KL-penalized version.
Specifically, the objective for the clipped version is to maximize
\begin{equation}
\begin{split}
\mathcal{L}_{\pi_{\theta_{\rm{old}}}}(\pi_{\theta}) = {\mathbb{E}}_t & \left[  \min ( r_t(\pi_{\theta_{\rm{old}}},\pi_{\theta}) {A}_t, {\rm clip} ( r_t(\pi_{\theta_{\rm{old}}},\pi_{\theta}), \right. \\
& \left. 1 - \epsilon, 1 + \epsilon ) {A}_t  ) \right], 
\end{split}
\label{eq:ppo}
\end{equation}
where $r_t(\pi_{\theta_{\rm{old}}},\pi_{\theta})=\frac{\pi_{\theta}(a_t | s_t)}{\pi_{\theta_{\rm{old}}}(a_t | s_t)}$ denotes the probability ratio, and $\epsilon$ is the parameter for clipping. 

\section{Multi-Path Policy Optimization}

\subsection{A Motivating Example}
Figure \ref{fig:maze}(a) shows a challenging environment Maze of size $21 \times 21$ with sparse rewards, where black lines in the middle represent walls. 
The agent always starts at $S$ located on the lower left corner of the maze with the goal of reaching the destination $G$ in the lower right corner, where the maximum length of an episode is 1000.
A reward of $+1$ is given only when the agent reaches $G$, and $0$ otherwise.
The experimental setting in Maze is the same as in Section 4.1.

\begin{figure}[!h] 
\centering
\subfloat[Maze.]{\includegraphics[width=0.255\linewidth]{./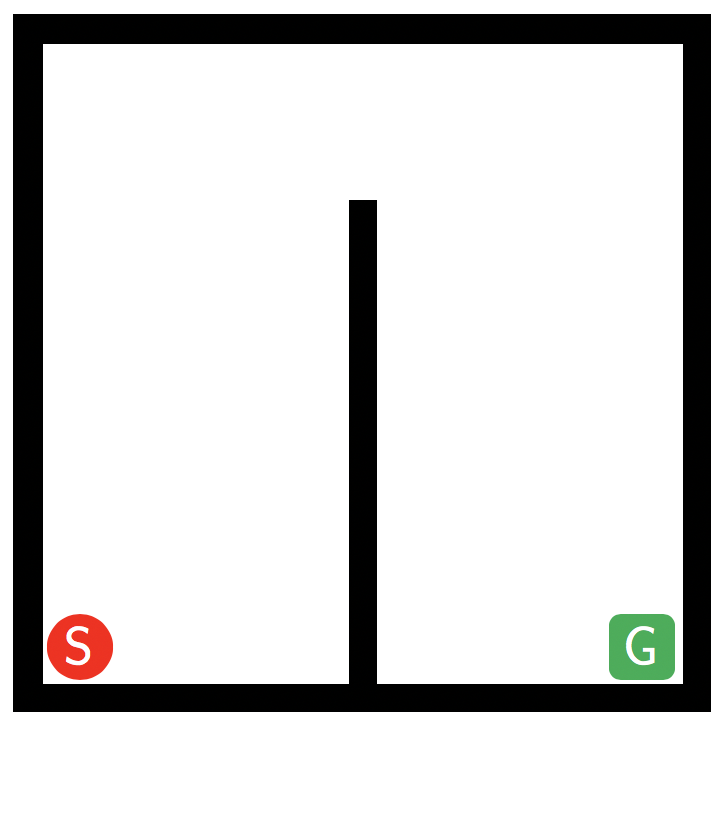}}
\subfloat[Return.]{\includegraphics[width=0.37\linewidth]{./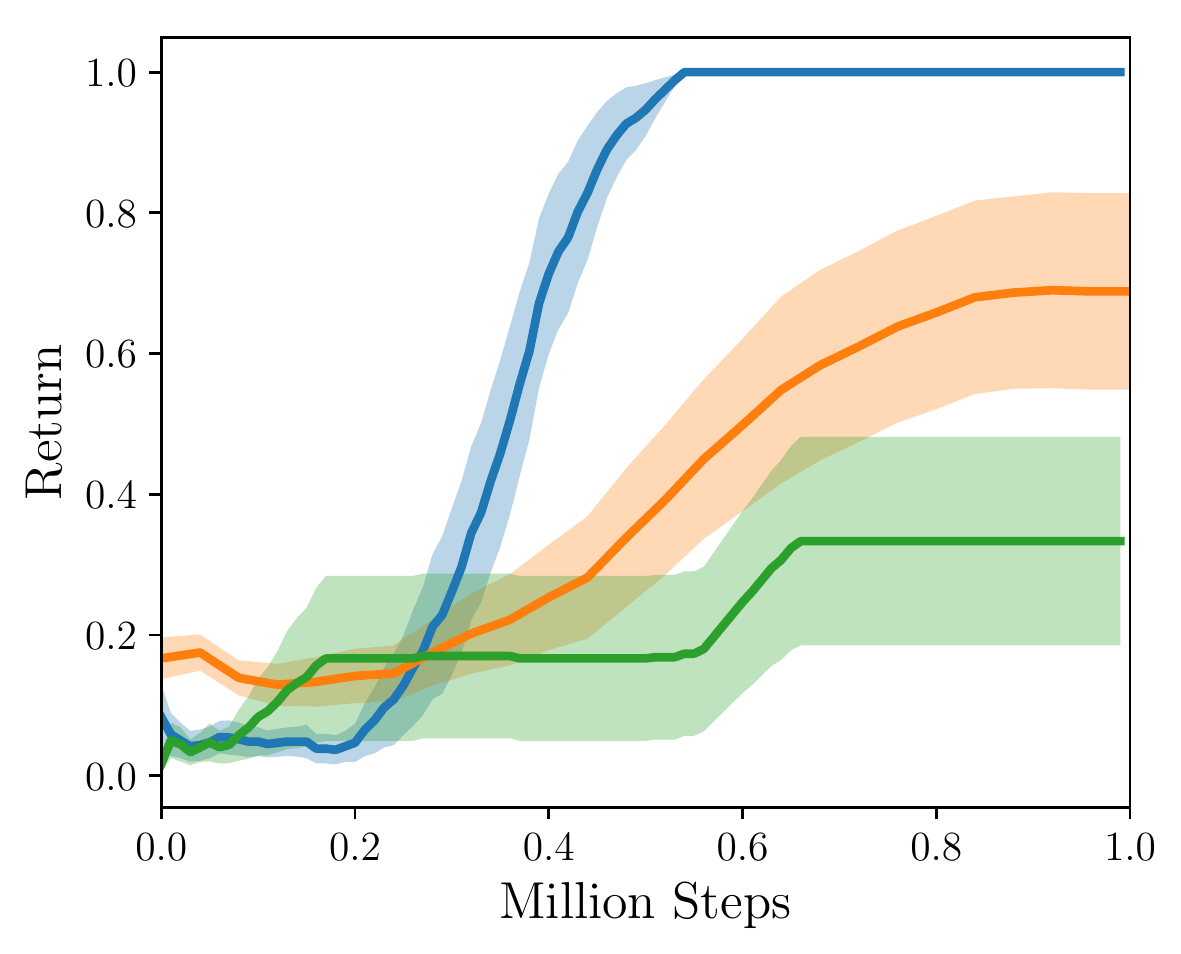}}
\subfloat[Entropy.]{\includegraphics[width=0.37\linewidth]{./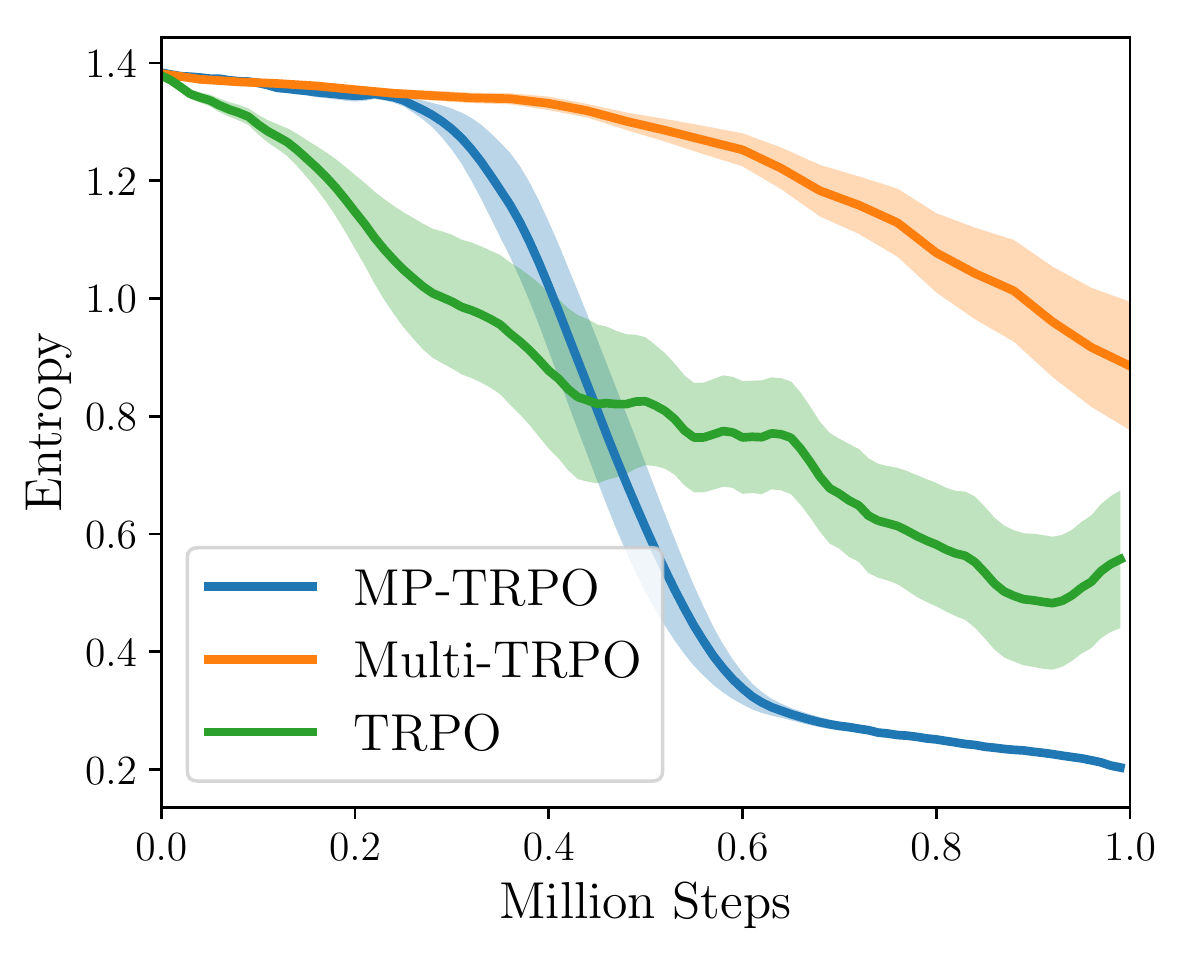}}
\caption{Performance and entropy comparisons on Maze.}
\label{fig:maze}
\end{figure}

We compare three schemes in this environment, i.e., TRPO, MP-TRPO (MPPO applied to TRPO) and Multi-TRPO (training a population of policies and picking the best one).
For fair comparison, all methods use the same amount of samples during training.
As shown in Figure \ref{fig:maze}(b), the agent can fail to reach the goal state under TRPO.
Figure \ref{fig:state_vis}(a) shows the resulting state visitation density under TRPO after training for 1 million steps.
Specifically, the brightness of a region in Figure \ref{fig:state_vis} indicates the number of times the agent visits that region, i.e., the brighter the region is, the more times the agent visits that region. 
It can be seen that the agent can only explore a very limited area in the maze and mainly stays in the left side.
It is also worth noting that simply training the ensemble of policies and choosing the best, i.e., Multi-TRPO, also fails to consistently find the destination.
Although it is able to search a larger region, it still mostly re-explores the left part as shown in Figure \ref{fig:state_vis}(b).
In contrast, MP-TRPO can always successfully reach the destination after 0.6 million steps while others fail.
As illustrated in Figure \ref{fig:state_vis}(c), it is capable to bypass the wall and explore both sides of the maze.

\begin{figure}[!h] 
\centering
\subfloat[TRPO.]{\includegraphics[width=0.3\linewidth]{./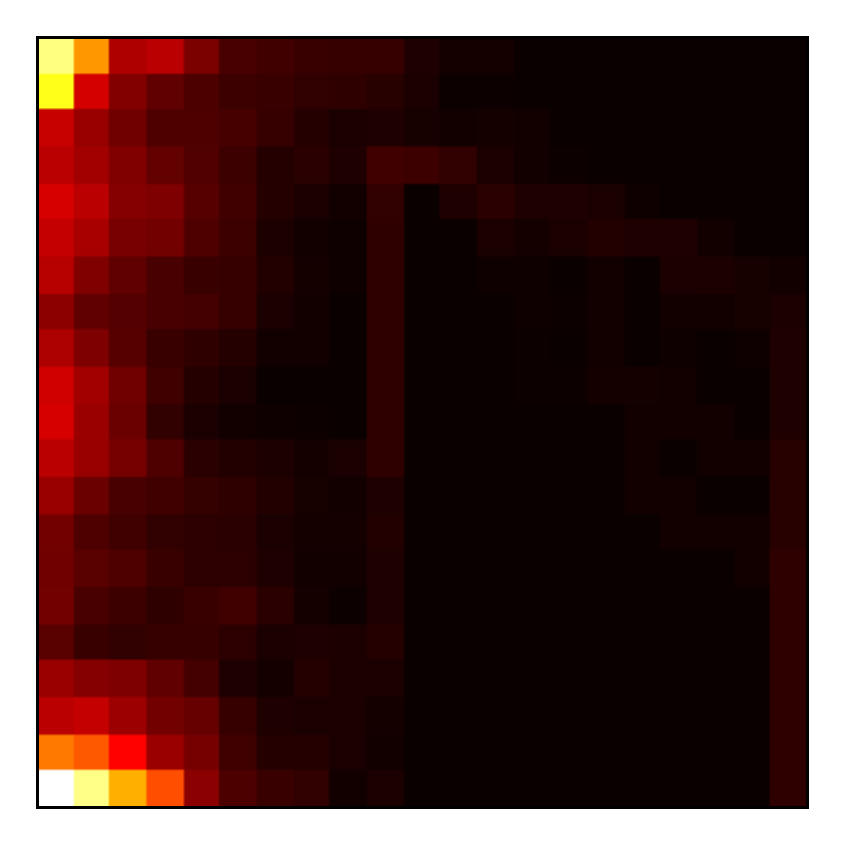}}
\subfloat[Multi-TRPO.]{\includegraphics[width=0.3\linewidth]{./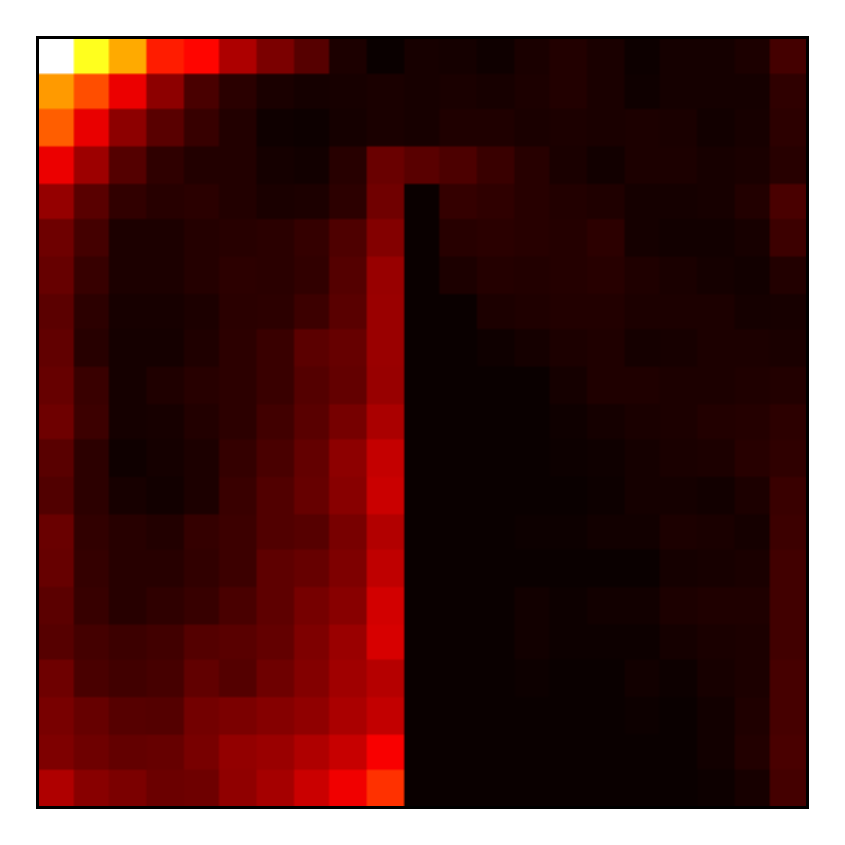}}
\subfloat[MP-TRPO.]{\includegraphics[width=0.3\linewidth]{./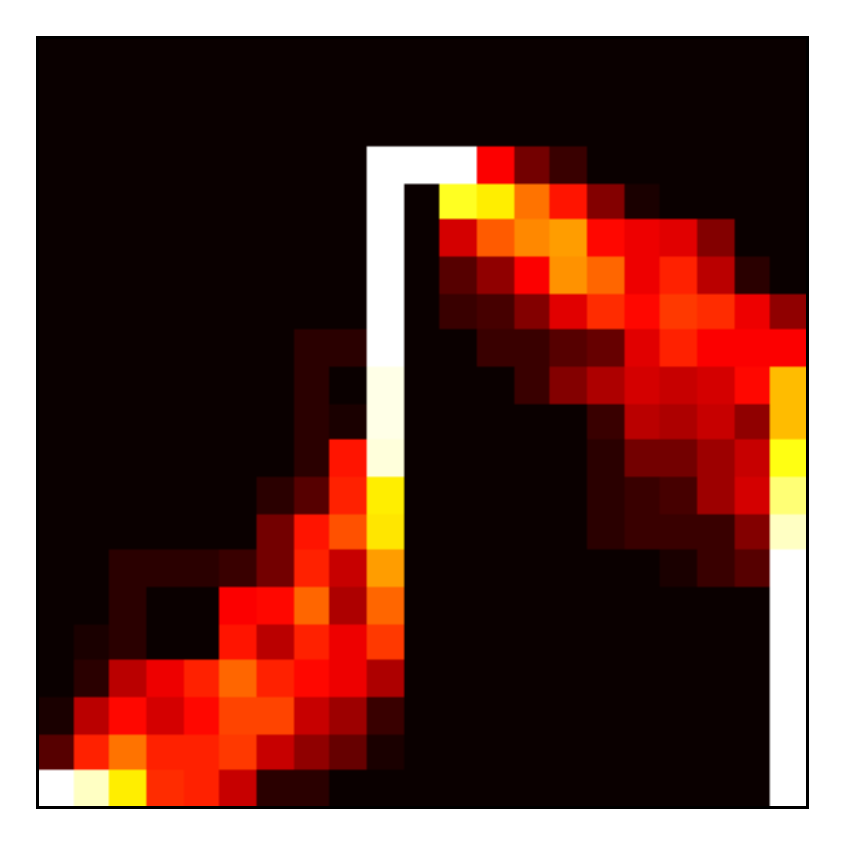}}
\caption{State visitation comparisons.}
\label{fig:state_vis}
\end{figure}

This is because TRPO suffers from insufficient exploration, and the entropy of the policy trained with TRPO decreases quickly as the policy is being optimized, as shown in Figure \ref{fig:maze}(c). 
Multi-TRPO maintains greater exploration ability with the population of policies.
However, recall that all three schemes consume the same amount of samples for training.
As it rolls out all policies at each iteration, the performance improvement of any single policy in the population is limited compared with MP-TRPO.
Indeed, for Multi-TRPO, the acquirement of diverse samples from the policy buffer comes at the expense of insufficient training of each policy under limited number of samples.
This is because on-policy algorithms cannot utilize experiences from other policies, and can only update the policy based on samples collected by itself. 
On the other hand, during the training process, MP-TRPO optimizes the policy while simultaneously maintaining enough exploration ability.

We next systematically describe our proposed method and the motivation behind it, to illustrate why MPPO helps to improve exploration.

\begin{algorithm*}[!h]
\KwIn{Initial policy buffer $\pi_0=( \pi_{10}, ..., \pi_{K0} )$ for $K$ initial policies with parameters $\theta_0=( \theta_{10}, ..., \theta_{K0} )$, initial value function $V_{\phi_0}$}
\For {$t = 0, 1, ...$}{
  Normalize $J(\pi_t)$ and $\mathcal{H}(\pi_t)$ according to Eq. (\ref{eq:normalized_J_H}) by $\forall k, \ \hat{J}_k(\pi_t) = \frac{J_k(\pi_t) - \min_{m} J_m(\pi_t)}{\max_m J_m(\pi_t) - \min_m J_m(\pi_t)}, \hat{\mathcal{H}}_k(\pi_t) = \frac{\mathcal{H}_k(\pi_t) - \min_m \mathcal{H}_m(\pi_t)}{\max_m \mathcal{H}_m(\pi_t) - \min_m \mathcal{H}_m(\pi_t)}.
$ \\
  Compute scores $f_k(\pi_t)$ according to Eq. (\ref{eq:criterion}) by $\forall k, \ f_k(\pi_t) = (1-\alpha) \hat{J}_k(\pi_t) + \alpha \hat{\mathcal{H}}_k(\pi_t)$\\
  Select the candidate policy $\pi_{it}$ where $ i= \arg \max_k f_k(\pi_t)$ \\
  Collect set of trajectories $\mathcal{D}$ by rolling out policy $\pi_{it}$ in the environment \\
  Evaluate the performance of the candidate policy $J_i(\pi_{t})$ based on the collected trajectories $\mathcal{D}$ \\ 
  Update the value function $V_{\phi_t}$ by regression on mean-squared error \\
  Compute advantage estimates ${A}_{\pi_{it}}$ using generalized advantage approach \cite{schulman2015high} based on $V_{\phi_t}$ \\
  Update the candidate policy parameter from $\pi_{\theta_{it}}$ to $\pi_{\theta_{i(t+1)}}$ by the base policy optimization method, e.g., TRPO or PPO \\
  Compute the performance gain $\mathcal{G}_{\pi_{\theta_{it}}}(\pi_{\theta_{i(t+1)}})$ according to Eq. (\ref{trpo_obj}) by $\hat{\mathbb{E}}_t \left[ \frac{\pi_{\theta_{i(t+1)}}(a_t|s_t)}{\pi_{\theta_{it}}(a_t|s_t)} A_t^{\pi_{\theta_{it}}}(s_t,a_t) \right]$\\ 
  Update the policy buffer by $\pi_{t+1} = ( \pi_{1t}, ..., \pi_{it}^{'}, ..., \pi_{Kt} )$ and the entropy buffer accordingly \\
  Update the performance buffer by $J(\pi_{t+1}) = \left( J_1(\pi_t), ..., J_i(\pi_t)+\mathcal{G}_{\pi_{\theta_{it}}}(\pi_{\theta_{i(t+1)}}), ..., J_K(\pi_t) \right)$ \\
\caption{Multi-Path Policy Optimization algorithm.}
\label{alg:mp_trpo}}
\end{algorithm*}

\subsection{Method}
The main idea of MPPO is summarized as follows.
The policy buffer is initialized with $K$ random policies, and a shared value network $V$ is also randomly initialized.
At each iteration $t$, a candidate policy $\pi_{it}$ is picked from the policy buffer $\pi_t=(\pi_{1t},...,\pi_{Kt})$, which is used as the rollout policy to interact with the environment to generate a set of samples.
The collected samples contribute to updating the shared value network for value function approximation.
The candidate policy is optimized according to the collected samples based on the shared value network.
Finally, the improved candidate policy $\pi_{it}^{'}$ updates the policy by replacing itself.

Specifically, the key components of the Multi-Path Policy Optimization method are as follows:

\subsubsection {{Pick and rollout.}} 
From previous analysis of Multi-TRPO, although a population of policies can bring diverse samples, policies in the population cannot directly exploit others' experiences in on-policy learning.
Therefore, it is unnecessary to rollout all policies in the population to interact with the environment for samples collection at each iteration.
In order to guarantee sample efficiency, we propose to pick a candidate policy from the current policy buffer at each iteration.

One common way is to pick a candidate policy randomly as in \cite{osband2016deep}.
However, this picking rule fails to fully utilize the policy buffer. 
This is because it can hardly provide guidance for the agent to pick the policy that is most desirable to rollout and to optimize, as each interaction with the environment can be expensive \cite{buckman2018sample}. 

The picking rule for MPPO is to choose the policy $\pi_{it}$ with highest score $f_i$, which takes into account both performance and entropy as defined in Eq. (\ref{eq:criterion}), i.e.,
\begin{equation}
\forall k, \ f_k(\pi_t) = (1-\alpha) \hat{J}_k(\pi_t) + \alpha \hat{\mathcal{H}}_k(\pi_t),
\label{eq:criterion}
\end{equation}
where $\hat{J}$ and $\hat{\mathcal{H}}$ denote the normalized performance and entropy according to min-max normalization as in Eq. (\ref{eq:normalized_J_H}).
\begin{equation}
\begin{split}
\forall k, \ & \hat{J}_k(\pi_t) = \frac{J_k(\pi_t) - \min_{m} J_m(\pi_t)}{\max_m J_m(\pi_t) - \min_m J_m(\pi_t)}, \\
& \hat{\mathcal{H}}_k(\pi_t) = \frac{\mathcal{H}_k(\pi_t) - \min_m \mathcal{H}_m(\pi_t)}{\max_m \mathcal{H}_m(\pi_t) - \min_m \mathcal{H}_m(\pi_t)}.
\end{split}
\label{eq:normalized_J_H}
\end{equation}
In Eq. (\ref{eq:normalized_J_H}), $J_k(\pi_t)$, $\mathcal{H}_k(\pi_t)$ denote the performance and entropy of policy $\pi_{kt}$ respectively, where we use Shannon entropy defined by $\mathcal{H}_k(\pi_t) = \mathbb{E}_{\pi_{kt}} \left[ - \log \pi_{kt}(a|s) \right]$, and other forms of entropy can also be used, e.g., Tsallis entropy \cite{chow2018path}.

The picking rule favors to pick the policy that is most desirable to rollout and to optimize, i.e., the one with good performance while being explorative simultaneously, which is a critical component of MPPO.
In Eq. (\ref{eq:criterion}), $\alpha$ provides the trade-off between exploration and exploitation. 
Note that a criterion focusing only on the performance cannot make good use of the policy buffer, as it tends to pick the policy updated in last iteration.
Therefore, it leads to a similar optimization process as that of single-path, which also suffers from insufficient exploration.
Considering the entropy term encourages exploring new behaviors.
However, if one always pick the policy with the maximum entropy, it fails to exploit learned good behaviors.
Our weighted rule is designed to strike for a good tradeoff between exploration and exploitation.

\subsubsection {{Value function approximation.}}
Samples collected by the candidate policy contribute to updating the shared value network to approximate the value function by minimizing the mean-squared error:
$\frac{1}{N} \sum_{n=1}^N {(r_n+\gamma V_{{\phi}_{t}}(s_{n+1}) - V_{{\phi}_{t}}(s_n))}^{2}.$
During the course of training with MPPO, the shared value network exploits diverse samples collected by policies that are most desirable to be picked from the diverse policy buffer at each iteration. 
In this way, it can better estimate the value function compared with that of single-path.
Therefore, it provides more information for the advantage function to distinguish good or bad actions, which is critical and helpful for policy optimization.

\subsubsection {{Policy optimization.}}
At each iteration $t$, only the candidate policy $\pi_{it}$ is optimized using a base policy optimization method, according to samples collected by itself and the shared value network.
\footnote{Note that MPPO aims to optimize the picked policy instead of all policies in the population.}
The objective of policy optimization is to maximize the expected advantages over the policy distribution, where the estimated policy gradient is
$\frac{1}{N} \sum_{n=1}^N \nabla_{\theta_{it}} \log \pi_{\theta_{it}} (a_{n} | s_{n}) {A}_{\pi_{it}} (s_{n}, a_{n}),$
given a batch of samples $\left\{(s_n,a_n,r_n,s_{n+1})\right\}$.
As discussed in the previous section, MPPO enables a better estimation of the advantage function with its mechanism utilizing the policy buffer.
Therefore, policy optimization drives each picked policy to explore previously unseen good states and actions.

\subsubsection {{Policy buffer update.}} Given the optimized policy at current iteration, the policy buffer needs to be updated.
A common way to update the policy buffer is to replace the worst policy in the policy buffer with the improved policy, as usually used in evolutionary-based methods for off-policy learning \cite{khadka2018evolution}. 
However, this updating scheme quickly loses the diversity of the policy buffer, and leads to a set of very similar policies ultimately, which results in a low exploration level, and will be further validated in Section \ref{sec:per_com}.
In MPPO, the updated policy will be added to the policy buffer by replacing the candidate policy itself, i.e., 
$\pi_{t+1}=(\pi_{1t},...,\pi_{it}^{'},...,\pi_{Kt}),$
which is able to maintain the diversity of the policy buffer.

The overall algorithm for the MPPO method is shown in Algorithm \ref{alg:mp_trpo}.
It is crucial to note that only the candidate policy interacts with the environment for sample collection, based on which both the candidate policy and the shared value network are updated. 

\subsection{Multi-Path Trust Region Policy Optimization}
We first apply our proposed MPPO method to a widely adopted on-policy algorithm TRPO \cite{schulman2015trust}, and obtain the resulting Multi-Path Trust Region Policy Optimization (MP-TRPO) algorithm.

Specifically, the update for the candidate policy is by backtracking line search with 
\begin{equation}
\theta_{i(t+1)} = \theta_{it} + \alpha^j \sqrt{\frac{2 \delta}{\hat{x}_{it}^T \hat{H}_{it} \hat{x}_{it}}} \hat{x}_{it},
\end{equation}
where $\hat{x}_{it} = \hat{H}_{it}^{-1} \hat{g}_{it}$ is computed by the conjugate gradient algorithm, and $\hat{g}_{it}=\nabla_{\theta} \mathcal{L}_{{\pi}_{\theta_{it}}}(\pi_{\theta}) |_{\theta=\theta_{it}}$ is the estimated policy gradient.
Note that the performance of the updated policy $J_i(\pi_{t+1})$ is estimated by $J_i(\pi_t) + \mathcal{G}_{\pi_{\theta_{it}}}(\pi_{\theta_{i(t+1)}})$.
Therefore, MP-TRPO does not require extra samples to evaluate the updated policy.

During the course of policy optimization, if the same policy is picked as in last iteration, MP-TRPO guarantees a monotonic improvement of the policy picked in current iteration over that in last iteration by \cite{schulman2015trust}.
On the other hand, if a policy that is more explorative but the performance is not as good as that in last iteration is picked, it may lead to a temporary performance drop.
In Theorem \ref{thm:MP_TRPO}, we show that such a performance drop can be bounded, ensuring a smooth policy transition.

\begin{theorem}
Let $i$, $j$ denote the indexes of policies that are picked at timestep $t$, $t+1$, respectively. 
Denote the improvement of $J_i(\pi_{t+1})$ over $J_i(\pi_t)$ as $\sigma_t$.
Then, the following bound holds for $0\leq \alpha<1$:
$J_j(\pi_{t+1}) - J_i(\pi_t) \geq \frac{-\alpha}{1-\alpha} \left[ \max_{k} J_k(\pi_{t+1}) - \min_{k} J_k(\pi_{t+1}) \right] + \sigma_t.$
\label{thm:MP_TRPO}
\end{theorem}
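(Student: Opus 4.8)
The plan is to exploit the defining property of the picking rule: at iteration $t+1$ the policy $j$ is selected precisely because it attains the highest score $f_k(\pi_{t+1})$ over the buffer $\pi_{t+1}$. In particular $f_j(\pi_{t+1}) \geq f_i(\pi_{t+1})$, since index $i$ is still a valid candidate at $t+1$ (the buffer-update rule of Algorithm \ref{alg:mp_trpo} replaces $\pi_{it}$ by its improved self rather than discarding index $i$). Writing out the score definition from Eq. (\ref{eq:criterion}), this inequality reads $(1-\alpha)\hat{J}_j(\pi_{t+1}) + \alpha \hat{\mathcal{H}}_j(\pi_{t+1}) \geq (1-\alpha)\hat{J}_i(\pi_{t+1}) + \alpha \hat{\mathcal{H}}_i(\pi_{t+1})$, which I would rearrange into $(1-\alpha)[\hat{J}_j(\pi_{t+1}) - \hat{J}_i(\pi_{t+1})] \geq \alpha[\hat{\mathcal{H}}_i(\pi_{t+1}) - \hat{\mathcal{H}}_j(\pi_{t+1})]$.

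The next step uses the min-max normalization in Eq. (\ref{eq:normalized_J_H}): every normalized entropy $\hat{\mathcal{H}}_k(\pi_{t+1})$ lies in $[0,1]$, so the entropy gap on the right satisfies $\hat{\mathcal{H}}_i(\pi_{t+1}) - \hat{\mathcal{H}}_j(\pi_{t+1}) \geq -1$. Combining with $\alpha \geq 0$ yields the lower bound $(1-\alpha)[\hat{J}_j(\pi_{t+1}) - \hat{J}_i(\pi_{t+1})] \geq -\alpha$, and since $\alpha < 1$ I may divide by $1-\alpha > 0$ to obtain $\hat{J}_j(\pi_{t+1}) - \hat{J}_i(\pi_{t+1}) \geq \frac{-\alpha}{1-\alpha}$.

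I would then unwind the normalization of the performance term. Writing $\Delta = \max_k J_k(\pi_{t+1}) - \min_k J_k(\pi_{t+1})$ for the common normalizing denominator, the difference of normalized performances equals $\frac{J_j(\pi_{t+1}) - J_i(\pi_{t+1})}{\Delta}$, since the subtracted minimum cancels; multiplying through by $\Delta$ gives $J_j(\pi_{t+1}) - J_i(\pi_{t+1}) \geq \frac{-\alpha}{1-\alpha}\Delta$. Finally I substitute the performance-accounting identity $J_i(\pi_{t+1}) = J_i(\pi_t) + \sigma_t$, which is exactly how the performance buffer is updated in Algorithm \ref{alg:mp_trpo} with $\sigma_t = \mathcal{G}_{\pi_{\theta_{it}}}(\pi_{\theta_{i(t+1)}})$, to arrive at the claimed inequality.

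The argument is essentially a one-line consequence of optimality of the pick together with boundedness of the normalized entropy, so I do not anticipate a deep obstacle; the main care points are bookkeeping. I must ensure the division by $1-\alpha$ is legitimate (guaranteed by $\alpha < 1$) and that the denominator $\Delta$ is strictly positive, i.e. the buffer performances are not all equal — otherwise the normalization is degenerate and the bound must be read in the appropriate limiting sense. I should also confirm the orientation of the pick (highest score is selected, so $f_j \geq f_i$) and that the sign flip from moving the entropy gap across the inequality is handled correctly, since that is where the factor $\frac{-\alpha}{1-\alpha}$ originates.
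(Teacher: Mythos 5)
Your proposal is correct and follows essentially the same route as the paper's proof: optimality of the pick gives $f_j(\pi_{t+1}) \geq f_i(\pi_{t+1})$, boundedness of the normalized entropies in $[0,1]$ yields $\hat{J}_j(\pi_{t+1}) - \hat{J}_i(\pi_{t+1}) \geq \frac{-\alpha}{1-\alpha}$, unwinding the min-max normalization produces the range term, and the definition of $\sigma_t$ (which the paper phrases via the TRPO monotonic-improvement guarantee and the performance-buffer update) finishes the argument. Your added care about the degenerate case $\Delta = 0$ is a sensible refinement that the paper itself glosses over.
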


\begin{proof}
As $\pi_j$ is the policy selected at timestep $t+1$, we have
\begin{equation}
f_j(\pi_{t+1}) > f_i(\pi_{t+1}).
\end{equation}
Thus, 
\begin{equation}
\begin{split}
\hat{J}_j(\pi_{t+1}) - \hat{J}_i(\pi_{t+1}) > \frac{-\alpha}{1 - \alpha} \left( \hat{\mathcal{H}}_j(\pi_{t+1})  - \hat{\mathcal{H}}_i(\pi_{t+1}) \right) \geq \frac{-\alpha}{1-\alpha}.
\end{split}
\end{equation}
According to the min-max normalization, we have
\begin{equation}
\hat{J}_j(\pi_{t+1}) = \frac{J_j(\pi_{t+1}) - \min_k J_k(\pi_{t+1})}{\max_k J_k(\pi_{t+1}) - \min_k J_k(\pi_{t+1})}.
\end{equation}
Then, we obtain
\begin{equation}
\begin{split}
J_j(\pi_{t+1}) - J_i(\pi_{t+1})
\geq \frac{-\alpha}{1-\alpha} \left[ \max_k J_k(\pi_{t+1}) - \min_k J_k(\pi_{t+1}) \right].
\end{split}
\end{equation}
According to the monotonic improvement theorem \cite{schulman2015trust}, we have
\begin{equation}
\begin{split}
J_j(\pi_{t+1}) - J_i(\pi_t)
\geq \frac{-\alpha}{1-\alpha} \left[ \max_k J_k(\pi_{t+1}) - \min_k J_k(\pi_{t+1}) \right] + \sigma_t.
\end{split}
\end{equation}
\end{proof}

Theorem \ref{thm:MP_TRPO} shows that although there may be a temporary performance drop due to switching to a more explorative policy, such a sacrifice is bounded by an $\alpha$-related term and the difference of the performance of the best and the worst policies in current policy buffer.

\begin{figure}[!h] 
\centering
\subfloat[Visualization of the course of policy picking.]{\includegraphics[width=0.8\linewidth]{./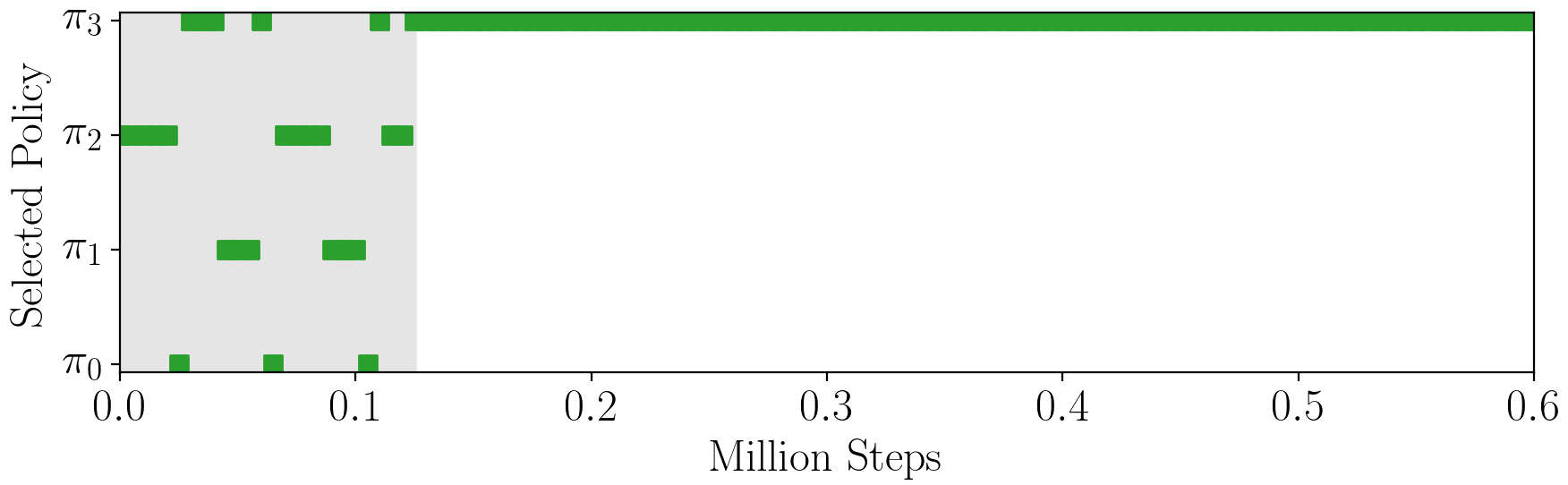}}
\\
\subfloat[Return of each policy.]{\includegraphics[width=0.45\linewidth]{./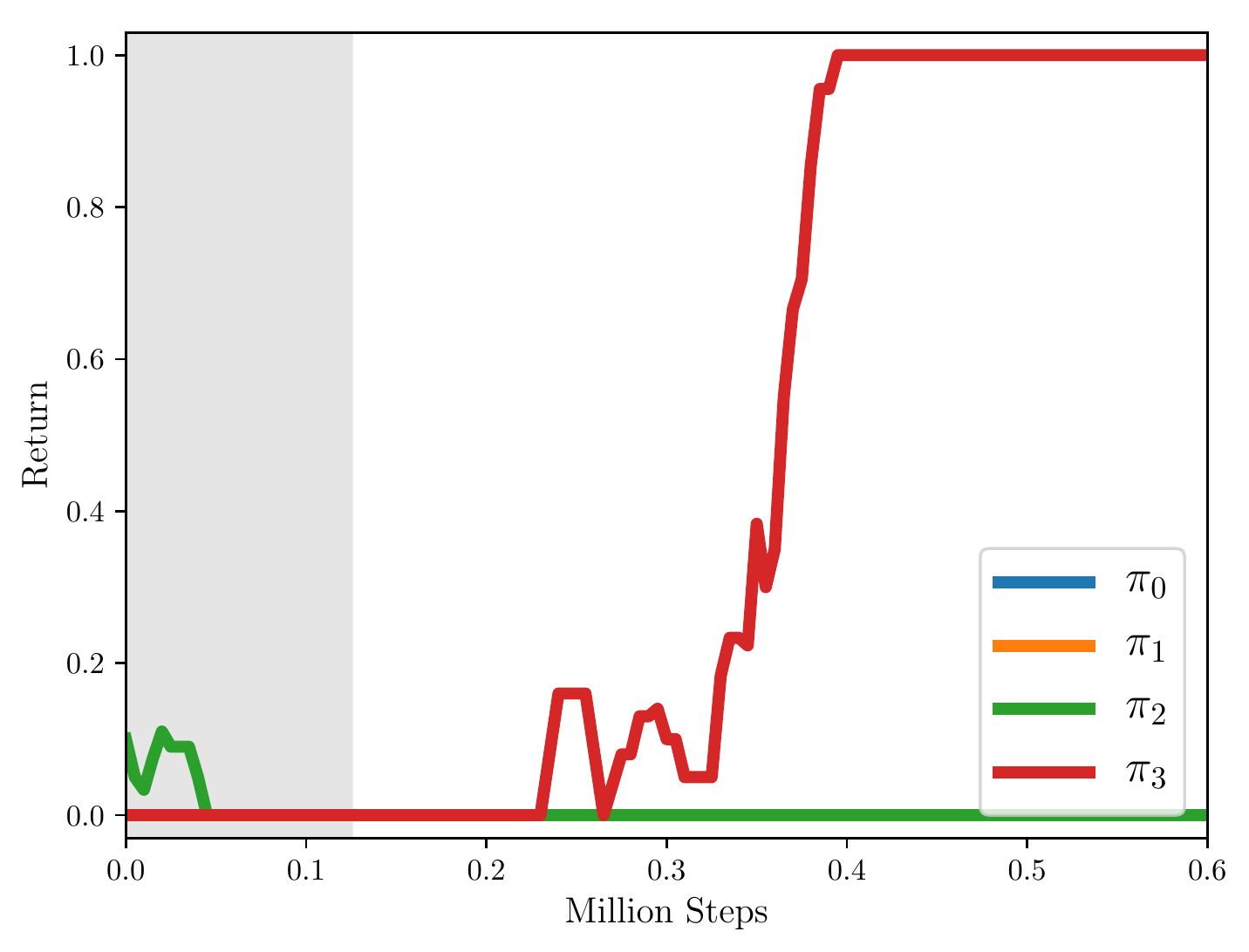}}
\subfloat[Entropy of each policy.]{\includegraphics[width=0.45\linewidth]{./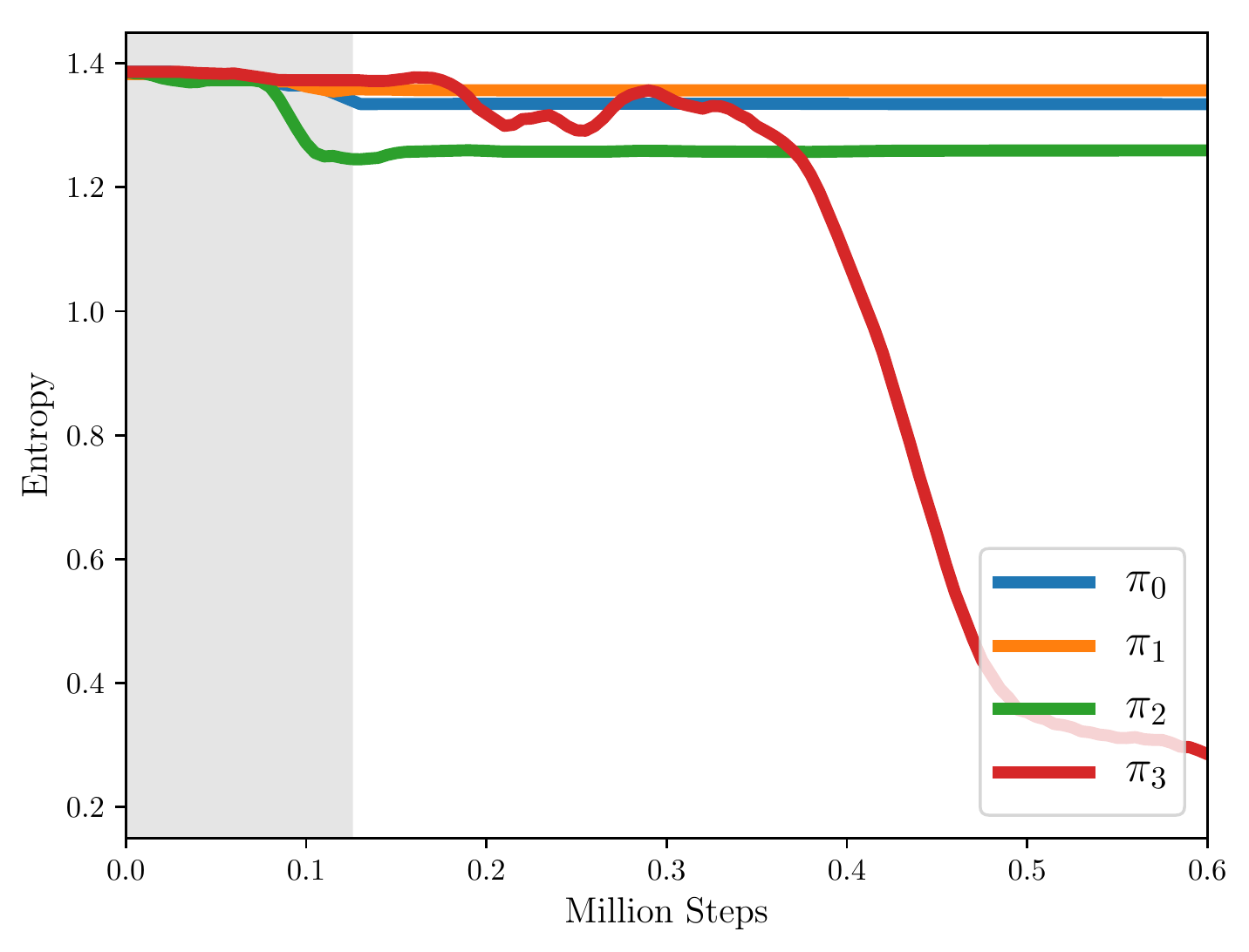}}
\caption{Learning details of MP-TRPO on Maze.}
\label{fig:selected_policy}
\end{figure}

Figure \ref{fig:selected_policy}(a) shows the learning process of MP-TRPO on Maze (Figure \ref{fig:maze}(a)) for a single seed.
In the beginning, MP-TRPO may pick different policies to collect samples and to optimize according to the picking rule, which provides diverse samples for updating the shared value network to better estimate the value function. 
This phase corresponds to the shaded region in Figure \ref{fig:selected_policy}(a), which leads to the fact that the performance gap between the best and the worst policies in the policy buffer is small (Figure \ref{fig:selected_policy}(b)).
Note that we use a fixed value of $\alpha$ to be $0.1$ in our experiments.
Therefore, the temporary performance drop is very small by Theorem \ref{thm:MP_TRPO}.
Note that a better estimation of the value function drives the optimization of the picked policy. 
In the end, MP-TRPO will converge to picking a single policy, in which case the performance of the picked policy will be monotone increasing. 
This observation actually holds for other seeds, and the full empirical result is referred to Appendix A.

We remark here that our method maintains good performance throughout the policy optimization process, while bringing the advantage of better exploration.

\subsection{Multi-Path Proximal Policy Optimization}
We also apply MPPO to another on-policy algorithm PPO \cite{schulman2017proximal}, and obtain the MP-PPO algorithm. 
To be specific, the candidate policy`s parameter is updated by stochastic gradient descent according to 
\begin{equation}
\theta_{i(t+1)}= \theta_{it} +  \eta \hat{g}_{it}, 
\end{equation}
where $\eta$ is the learning rate, and $\hat{g}_{it}=\nabla_{\theta} \mathcal{L}_{{\pi}_{\theta_{it}}}(\pi_{\theta}) |_{\theta=\theta_{it}}$ is the policy gradient estimated according to Eq. (\ref{eq:ppo}).

\section{Experiments}
We conduct extensive experiments to investigate the following key questions:
\begin{itemize}
\item How does MPPO compare with single-path policy optimization and state-of-the-art exploration methods? 

\item What is the effect of the number of paths $K$ and the weight $\alpha$?

\item Which component of MPPO is critical for the improvement of the exploration ability?

\item Is MPPO generally applicable given a baseline on-policy reinforcement learning algorithm to encourage exploration?
\end{itemize}

\subsection{Experimental Setup}
We evaluate MPPO on several continuous control environments simulated by the MuJoCo framework \cite{todorov2012mujoco}, which is a standard and widely-used benchmark for evaluating deep reinforcement learning algorithms \cite{duan2016benchmarking}.
MuJoCo tasks exhibit dense rewards, where the agent receives a reward at each step.
To better examine the exploration ability of our method, we further conduct evaluation in some more challenging variants of the original environments with sparse rewards \cite{houthooft2016vime,fu2017ex2,kang2018policy,plappert2018parameter,gangwani2019learning}.
For example, in {\sc SparseDoublePendulum}, a reward of $+1$ is given only when the agent reaches the goal that it swings the double pendulum upright, and $0$ otherwise.
Detailed descriptions of the benchmark environments are referred to Appendix B.
Each algorithm is run with $6$ different random seeds $(0$-$5)$, and the performance is evaluated for $10$ episodes every $10,000$ steps.
Note that the performance of MPPO is evaluated by the picked policy.
The averaged return in evaluation is reported as the solid line, with the shaded region denoting a $75\%$ confidence interval.
For fair comparisons, the hyper-parameters for all comparing algorithms are set to be the same as the best set of hyper-parameters reported in \cite{henderson2018deep}.
Please refer to Appendix B \cite{appendix} for implementation details.

\subsection{Baselines}
To comprehensively study the MP-TRPO algorithm, we compare it with six baselines.
For fair comparison, all methods use the same amount of $N$ samples during the course of policy optimization. 
\begin{itemize}
\item {\bf TRPO \cite{schulman2015trust}.} Vanilla single-path TRPO algorithm.
\item {\bf Curiosity-TRPO \cite{pathak2017curiosity}.} The curiosity-driven approach, which is a state-of-the-art method for exploration by augmenting the reward function with learned intrinsic rewards.
\item {\bf Diversity (Div)-TRPO \cite{hong2018diversity}.} The diversity-driven approach, which is also a state-of-the-art exploration method that augments the loss function of the policy with the distance of current policies and prior policies.
\item {\bf Multi-TRPO.} A baseline method that trains multiple ($K$) single-path TRPO with a shared value network and chooses the best one. We compare with the method to isolate the effect of the policy ensemble.
\item {\bf Multi-TRPO (Independent).} A baseline method training $K$ single-path TRPO, where each policy has its own value network, resulting in $K$ independent value networks in total. We compare with the method is to validate the effect of the shared value network.
\item {\bf MP-TRPO (ReplaceWorst).} A variant of MP-TRPO which updates the policy buffer by replacing the worst policy with the improved candidate policy. We compare with the method to evaluate the importance of the replacement strategy for updating the policy buffer.
\end{itemize}
We also verify the effectiveness of the picking rule by using different weights of $\alpha$ in MP-TRPO.

Then, we apply our proposed multi-path policy optimization mechanism to another baseline policy optimization method, PPO, to demonstrate the general applicability of the MPPO method, and conduct similar evaluation.

\subsection{Ablation Study}

\subsubsection {The effect of the number of paths $K$.} Figure \ref{fig:ablation_k} shows the performance of MP-TRPO and MP-PPO with varying $K$ on {\sc SparseDoublePendulum}.
The $K$ value trades off the diversity of the policy buffer and sample efficiency. 
A larger $K$ maintains a greater diversity, but may require more samples to learn as there are more policies to be picked and to be optimized in early periods of learning.
Indeed, there is an intermediate value for $K$ that provides the best trade-off.
We find that MP-TRPO with $K=8$ achieves the best performance and thus we fix $K$ to be $8$ on all environments.
For MP-PPO, a relatively smaller $K=2$ is sufficient and performs best.
This is because PPO itself exhibits greater exploration ability than TRPO, so we choose $K$ to be $2$ in all environments for MP-PPO. 
Note that MPPO with different values of $K$ all outperform the corresponding baseline policy optimization method (TRPO or PPO).

It is also worth noting that MPPO does not incur much more memory consumption for the population of policies in the policy buffer, where it only uses 1.67\% and 4.78\% more memory for MP-TRPO ($K=8$) and MP-PPO ($K=2$) compared with TRPO and PPO respectively on {\sc SparseDoublePendulum}. 
The summary of memory consumption for different $K$ is referred to Appendix C.

\begin{figure}[!h] 
\centering
\subfloat[Varying $K$ for MP-TRPO.]{\includegraphics[width=0.5\linewidth]{./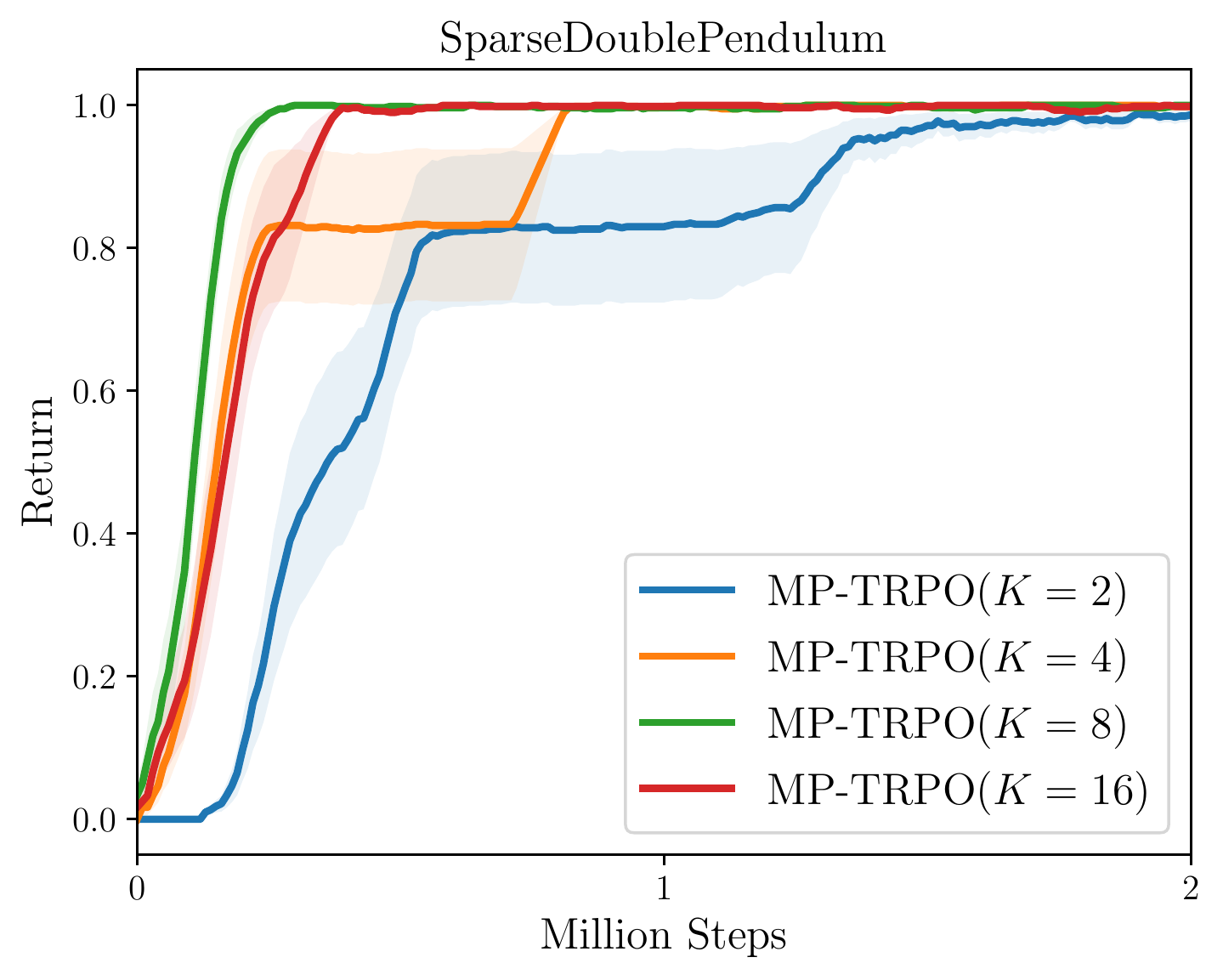}}
\subfloat[Varying $K$ for MP-PPO.]{\includegraphics[width=0.5\linewidth]{./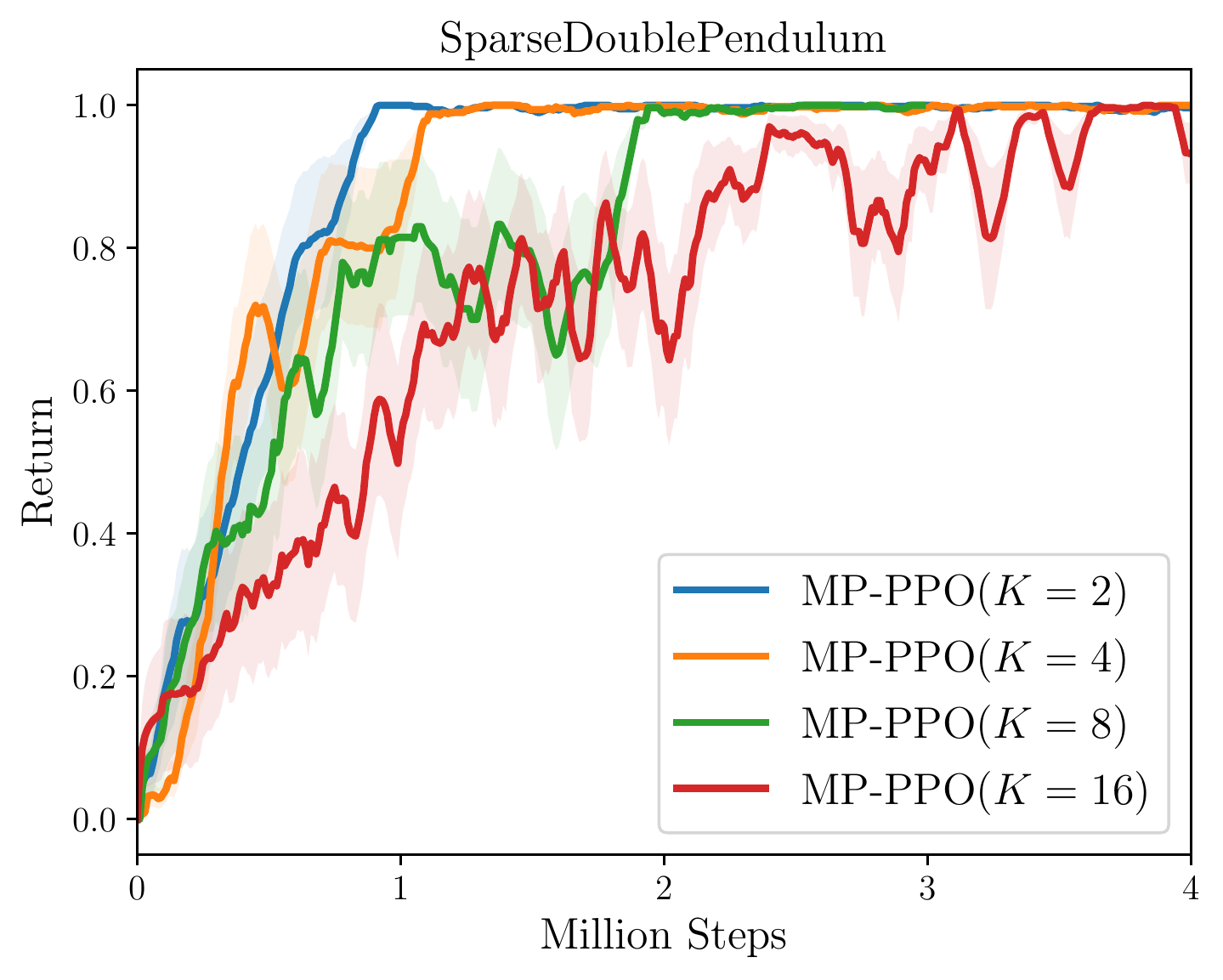}}
\caption{Ablation study of varying $K$.}
\label{fig:ablation_k}
\end{figure}

\subsubsection {The effect of the weight $\alpha$.} In the picking rule, $\alpha$ controls the trade-off between exploration and exploitation.
A larger $\alpha$ emphasizes more on the exploration ability of the picked policy, but may fail to utilize the result of policy optimization.
In addition, according to Theorem \ref{thm:MP_TRPO}, a large $\alpha$ may lead to a temporary performance drop.
On the other hand, a smaller $\alpha$ focuses more on exploiting the current best-performing policy in the policy buffer, where $\alpha=0$ refers to always picking the best policy based on current estimation.
We vary $\alpha$ for MP-TRPO on {\sc SparseDoublePendulum}, and the result is shown in Figure \ref{fig:ablation_alpha}.
As expected, a small $\alpha=0.1$ achieves the best performance, so we fix $\alpha$ to be $0.1$ in all environments for both MP-TRPO and MP-PPO.

\begin{figure}[!h]
\centering
\includegraphics[scale=0.28]{./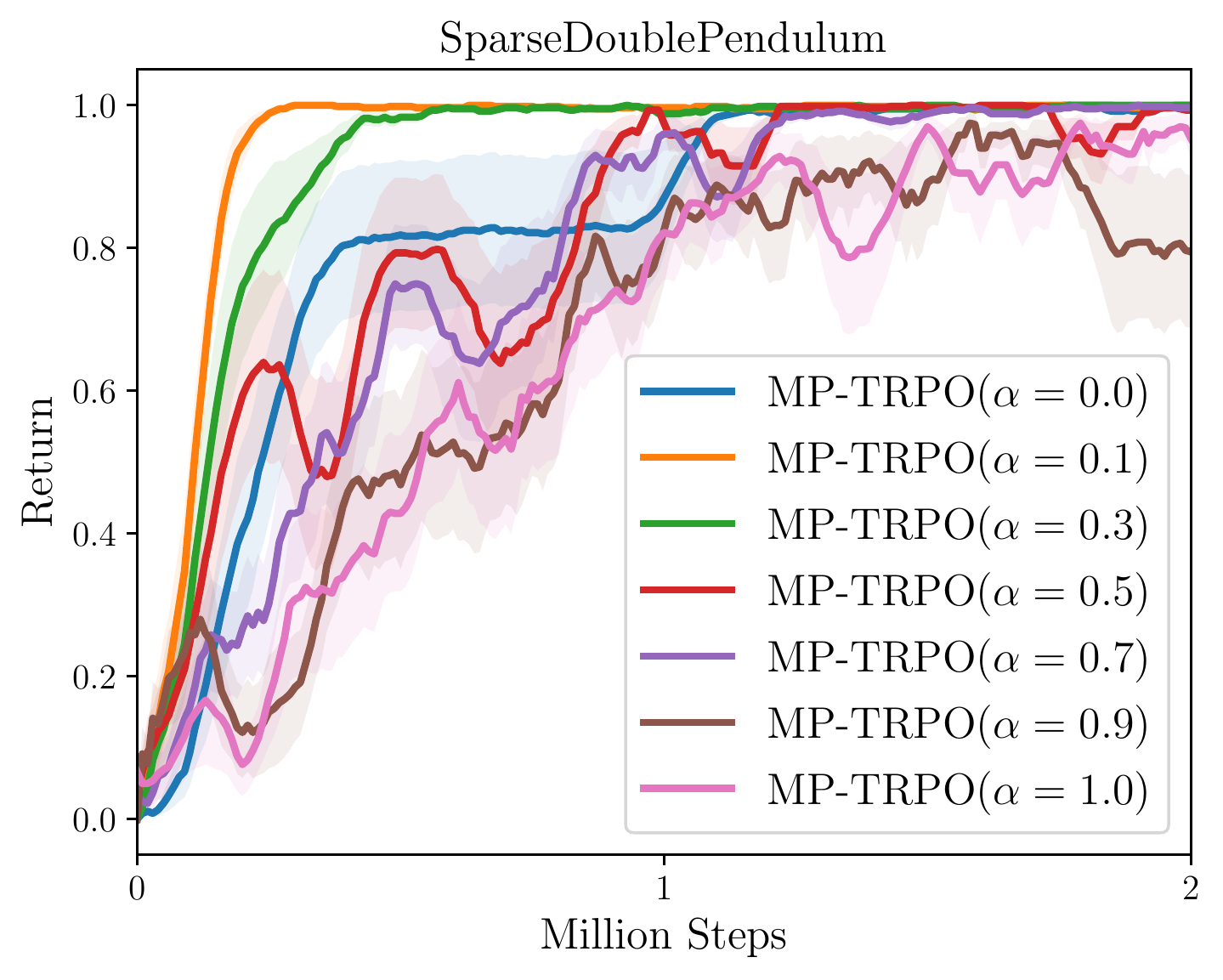}
\caption{Ablation study of varying $\alpha$.}
\label{fig:ablation_alpha}
\end{figure}

\subsection{Performance Comparison} \label{sec:per_com}
\begin{figure*}
  \centering
  \includegraphics[scale=0.25]{./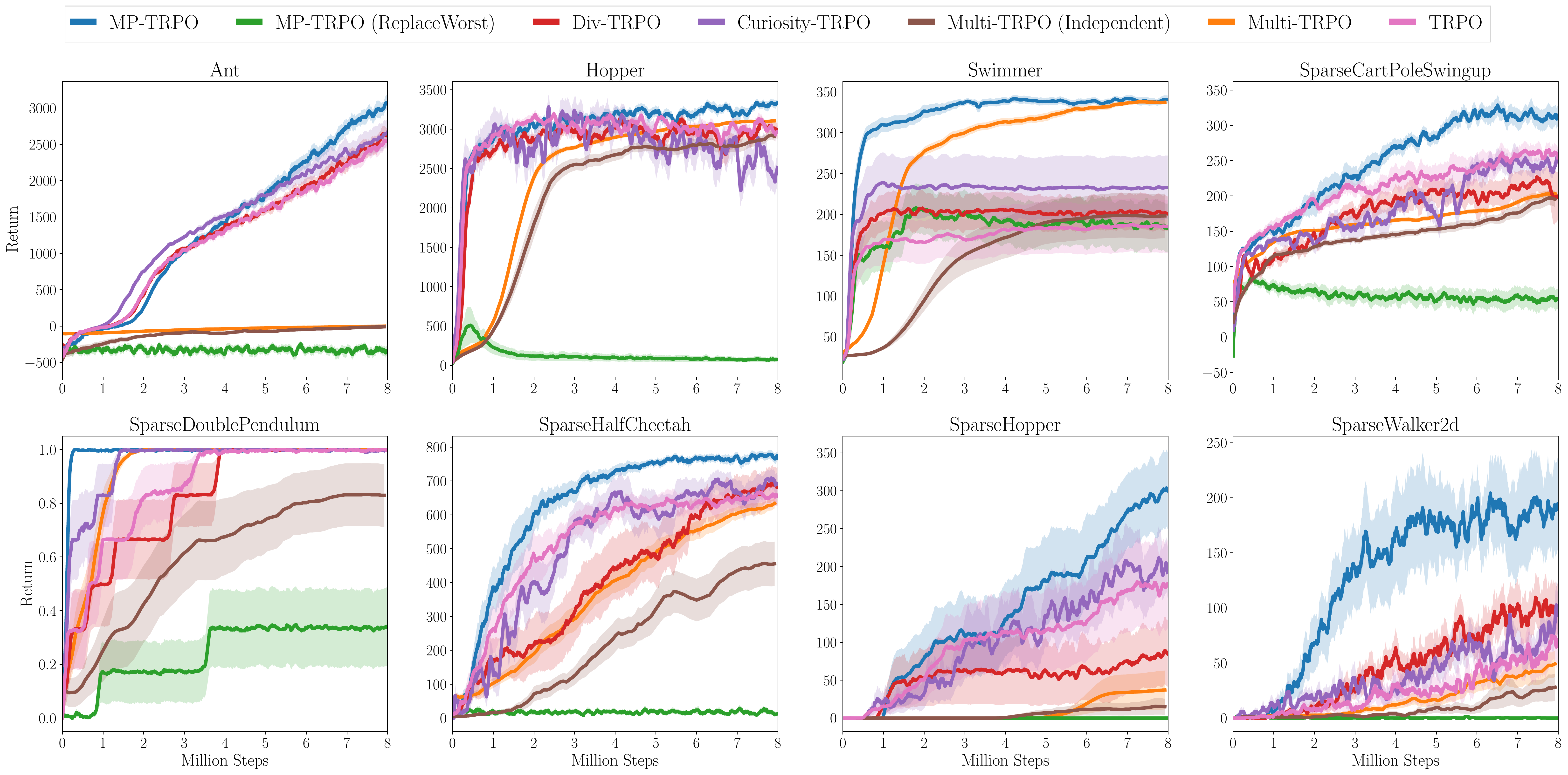}
    \caption{Performance comparison of MP-TRPO.}    
    \label{fig:trpo}
\end{figure*}

\begin{table*}[!h]
  \footnotesize
  \centering
  \caption{Comparison of MP-TRPO on final performance (mean and confidence interval). }
  \begin{tabular}{lccccccc}
    \toprule
    \multirow{2}*{Environment} & \multirow{2}*{MP-TRPO} & MP-TRPO & \multirow{2}*{Div-TRPO} & \multirow{2} * {Curiosity-TRPO} & {Multi-TRPO} & \multirow{2} *{Multi-TRPO} & \multirow{2}*{TRPO} \\
    ~ & ~ & (ReplaceWorst) & ~ & ~ & (Independent) & ~ & ~ \\
    \midrule
    \sc Ant & {\bf 3017.02} (116.405) & -470.05 (57.65) & 2635.00 (138.28) & 2744.79 (190.23) & 0.93 (1.99) & 3.90 (1.19) & 2558.33 (120.89) \\
    \sc Hopper & {\bf 3257.55} (62.68) & 76.25 (32.68) & 2957.31 (82.94) & 2786.99 (258.17) & 2878.34 (91.57) & 3138.41 (17.66) & 2947.19 (111.43)\\
    \sc Swimmer & {\bf 340.92} (4.39) & 179.67 (29.63) & 199.06 (20.58) & 232.89 (38.93) & 198.00 (27.47) & 339.17 (2.21) & 186.21 (32.63) \\
    \sc SparseCartPoleSwingup & {\bf 320.47} (14.48) & 49.10 (16.99) & 203.22 (33.56) & 244.23 (11.42) & 180.17 (11.50) & 213.07 (4.07) & 238.88 (11.08)\\
    \sc SparseDoublePendulum & {\bf 1.00} (0.00) & 0.33 (0.15) & {\bf 1.00} (0.00) & {\bf 1.00} (0.00) & 0.83 (0.12) & 0.98 (0.01) & {\bf 1.00} (0.00) \\
    \sc SparseHalfCheetah & {\bf 756.02} (14.02) & 24.52 (8.14) & 699.03 (57.15) & 656.98 (29.24) & 438.50 (66.98) & 676.22 (14.35) & 639.32 (38.40) \\
    \sc SparseHopper & {\bf 302.32} (52.48) & 0.00 (0.00) & 89.60 (45.09) & 188.47 (33.51) & 14.63 (10.29) & 38.33 (25.82) & 182.63 (60.13) \\
    \sc SparseWalker2d & {\bf{186.48}} (43.15) & 0.00 (0.00) & 112.75 (33.29) & 87.33 (25.18) & 37.67 (12.57) & 56.18 (17.27) & 53.78 (15.57) \\
    \bottomrule
  \end{tabular}
  \label{tab:table_comparison}
\end{table*}

{\bf Comparative analysis.}
The comparative results of MP-TRPO are demonstrated in Figure \ref{fig:trpo}.
As shown, MP-TRPO is consistently more sample efficient than Div-TRPO in all environments.
In addition, it outperforms Curiosity-TRPO in all but one environment in terms of sample efficiency.
The margin is larger especially in sparse environments.
Table 1 summarizes the performance at the end of training, which shows that MP-TRPO achieves the best final performance in all environments.

Div-TRPO augments the loss function with a measure of distances between past policies.
As trust-region methods limit the update, the distance among past policies is not large.
Thus, the diversity-driven technique does not enable significant improvement of exploration on TRPO.
Curiosity-TRPO augments the reward function with a curiosity term that measures how novel a state is. 
It encourages the agent to re-explore states that are known to be unfamiliar with. 
However, it can be challenging for the agent to first discover such states in sparse environments. \\

\begin{figure*}[!h]
  \centering
  \includegraphics[scale=0.24]{./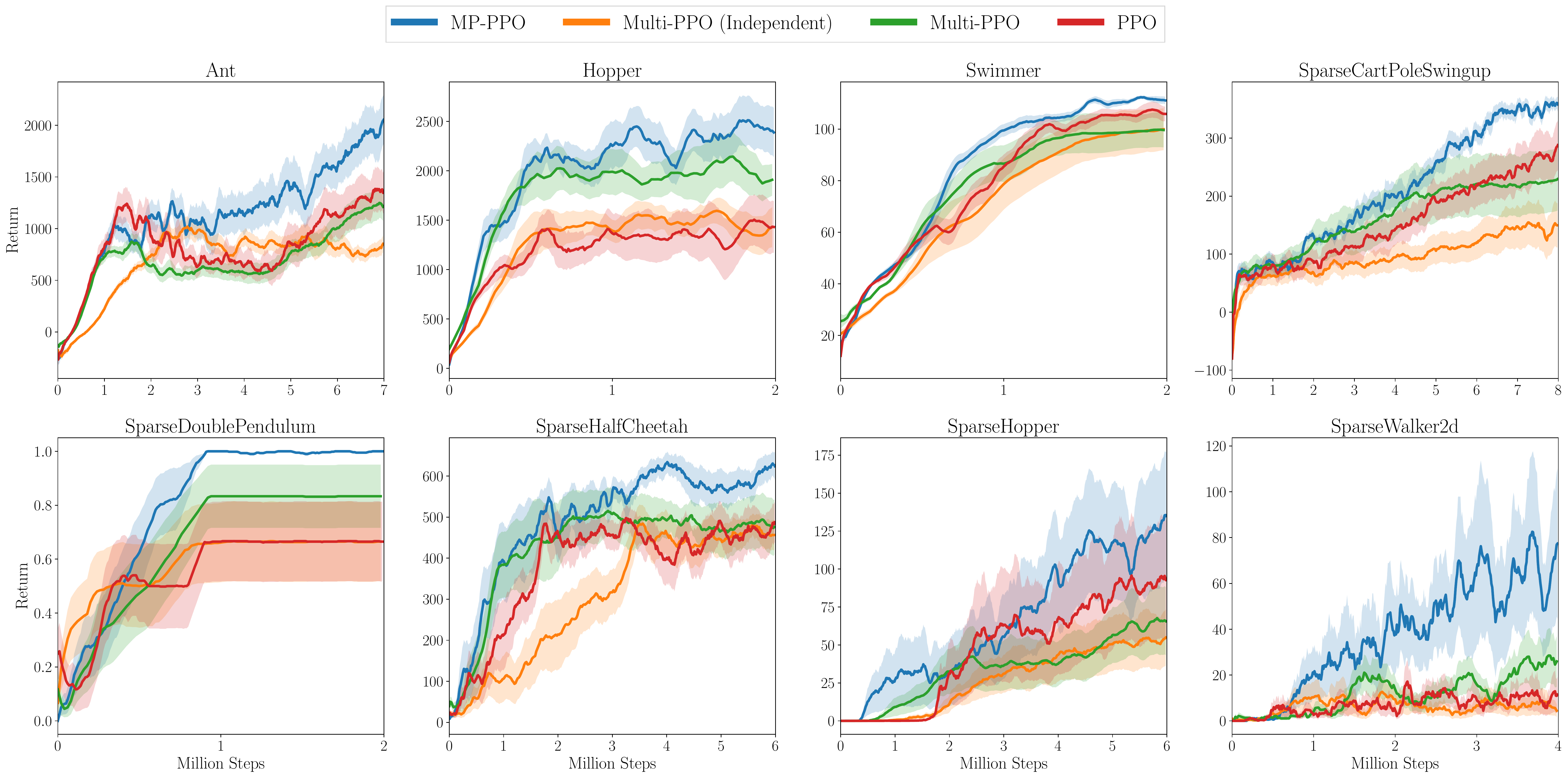}
    \caption{Performance comparison of MP-PPO.}    
    \label{fig:ppo}
\end{figure*}

\begin{table*}[!h]
    \footnotesize
  \centering
  \caption{Comparison of MP-PPO on eventual performance (mean and confidence interval).}
  \begin{tabular}{lcccc}
    \toprule
    Environment & MP-PPO & Multi-PPO (Independent) & Multi-PPO & PPO \\
    \midrule
    \sc Ant & {\bf 1992.21} (216.33) & 902.31 (101.00) & 1131.90 (110.94) & 1311.35 (220.93) \\
    \sc Hopper & {\bf 2264.54} (234.95) & 1449.33 (224.93) & 2073.46 (161.47) & 1457.18 (258.87) \\
    \sc Swimmer & {\bf 109.96} (1.93) & 100.28 (9.08) & 98.79 (6.99) & 106.12 (2.78) \\
    \sc SparseCartPoleSwingup & {\bf 352.12} (16.14) & 135.65 (37.47) & 238.02 (55.98) & 294.88 (43.42) \\
    \sc SparseDoublePendulum & {\bf 1.0} & 0.67 (0.15) & 0.83 (0.12) & 0.67 (0.15) \\
    \sc SparseHalfCheetah & {\bf 593.37} (40.60) & 456.65 (35.86) & 478.70 (59.02) & 463.07 (36.72) \\
    \sc SparseHopper & {\bf 131.35} (40.08) & 57.88 (19.86) & 63.75 (22.69) & 90.40 (37.61) \\
    \sc SparseWalker2d & {\bf 55.93} (31.98) & 1.67 (1.17) & 41.68 (19.91) & 10.20 (7.18) \\
    \bottomrule 
  \end{tabular}
  \label{tab:table_comparison_ppo}
\end{table*}

\noindent {\bf Effect of each component.}
Regarding the shared value network, Multi-TRPO outperforms Multi-TRPO (Independent), as it enables a better estimation of the value function.
Additionally, MP-TRPO outperforms Multi-TRPO significantly, which validates that MP-TRPO enables efficient exploration with its mechanism to utilize the population of policies.
As for the strategy for policy buffer updates, note that MP-TRPO ({ReplaceWorst}), where replacing the worst policy is a common strategy in evolutionary-based methods \cite{khadka2018evolution}, performs poorly in all but one benchmark environments.  
After updating the picked policy, it replaces the worst-performing policy in the buffer with this improved policy.
Under this updating scheme, the policy buffer loses the diversity of policies quickly and soon only stores $K$ similar copies of a single policy.
Thus, MP-TRPO ({ReplaceWorst}) performs worse than {Multi-TRPO (Independent)}.
In contrast, the replacement strategy of MP-TRPO best preserves the diversity of the policy buffer while ensuring policy optimization.

Our results provide empirical evidence that MPPO is an efficient mechanism to fully utilize the population of policies without bringing high computation overhead. \\

\noindent {\bf Performance based on PPO.}
To show that MPPO is readily applicable to other baseline on-policy algorithms,
we build it upon PPO, and evaluate the resulting MP-PPO algorithm by comparing it with the corresponding {PPO}, {Multi-PPO}, and Multi-PPO (Independent) algorithms.
Learning curves are shown in Figure \ref{fig:ppo}, with the final performance summarized in Table \ref{tab:table_comparison_ppo}. 
Results show that MP-PPO outperforms the baseline methods in all environments in terms of sample efficiency and final performance, which demonstrates its effectiveness to encourage exploration.

\section{Related Work}
In reinforcement learning, entropy is a critical term which relates to the uncertainty of the policy.
Entropy-regularized reinforcement learning \cite{haarnoja2017reinforcement,haarnoja2018soft,nachum2017trust} optimizes the standard objective augmented by an entropy regularizer considering the distance with the random policy, and thus learns a stochastic policy for better exploration.  
Our method differs from them in that MPPO still optimizes the standard objective, where the picking rule involves the performance and an entropy bonus term.

Evolutionary methods have emerged to be an alternative of deep reinforcement learning \cite{salimans2017evolution,such2017deep}, and recent works that combine evolutionary methods and reinforcement learning \cite{khadka2018evolution,gangwani2017policy} have shown great power for better exploration and stability.
There have also been a number of approaches improving exploration by combining evolutionary methods with deep reinforcement learning by maintaining a population of agents.
Gangwani et al. \cite{gangwani2017policy} apply policy gradient methods to mutate the population.
Khadka et al. \cite{khadka2018evolution} utilize a population of evolutionary actors to collect samples, where a reinforcement learning actor based on DDPG \cite{lillicrap2015continuous} is updated using these samples.
Pourchot et al. \cite{pourchot2018cem} propose to combine the cross-entropy method and TD3 \cite{fujimoto2018addressing}.
Our work differs from previous works in several aspects. 
First, we train a single policy at each iteration instead of a population of policies, and only the picked policy interacts with the environment.
Second, we use multi-path to enable better exploration than single-path for on-policy algorithms, while previous works cannot be applied to on-policy algorithms.

Another approach related to our work is \cite{zhang2019ace}, where Zhang et al. propose to escape from local maxima for an off-policy algorithm, DDPG \cite{lillicrap2015continuous}, by utilizing an ensemble of actors.
The critic is updated according to the best action proposed by all actors that results in maximum Q-value, and all actors are trained in parallel.
However, it cannot be applied to RL algorithms with stochastic policies.

\section{Conclusion}
We present Multi-Path Policy Optimization (MPPO), which uses a population of policies to improve exploration for on-policy reinforcement learning algorithms.
We apply the MPPO method to TRPO and PPO, and show that the performance can be guaranteed during policy switching.
We conduct extensive experiments on several MuJoCo tasks including environments with sparse rewards, and show that MPPO outperforms baselines significantly in both sample efficiency and final performance.

\begin{acks}
The work of  Ling Pan and Longbo Huang was supported in part by the National Natural Science Foundation of China Grant 61672316, the Zhongguancun Haihua Institute for Frontier Information Technology and the Turing AI Institute of Nanjing.
\end{acks}

\appendix

\section{Visualization of the Picking Rule}
The picked policies chosen by the picking rule of MP-TRPO on Maze during the first 0.5 million steps (the total number of training steps is 1 million) by different random seeds (0-5) is shown in Figure \ref{fig:vis}.
The x-axis and y-axis correspond to the training steps and the index of the picked policies.
As shown, in the beginning of learning, different policies are picked according to the picking rule, which is a weighted objective of performance and entropy.
Finally, MPPO converges to picking a same policy to optimize.
\begin{figure}[!h]
    \centering
    \includegraphics[scale=0.34]{./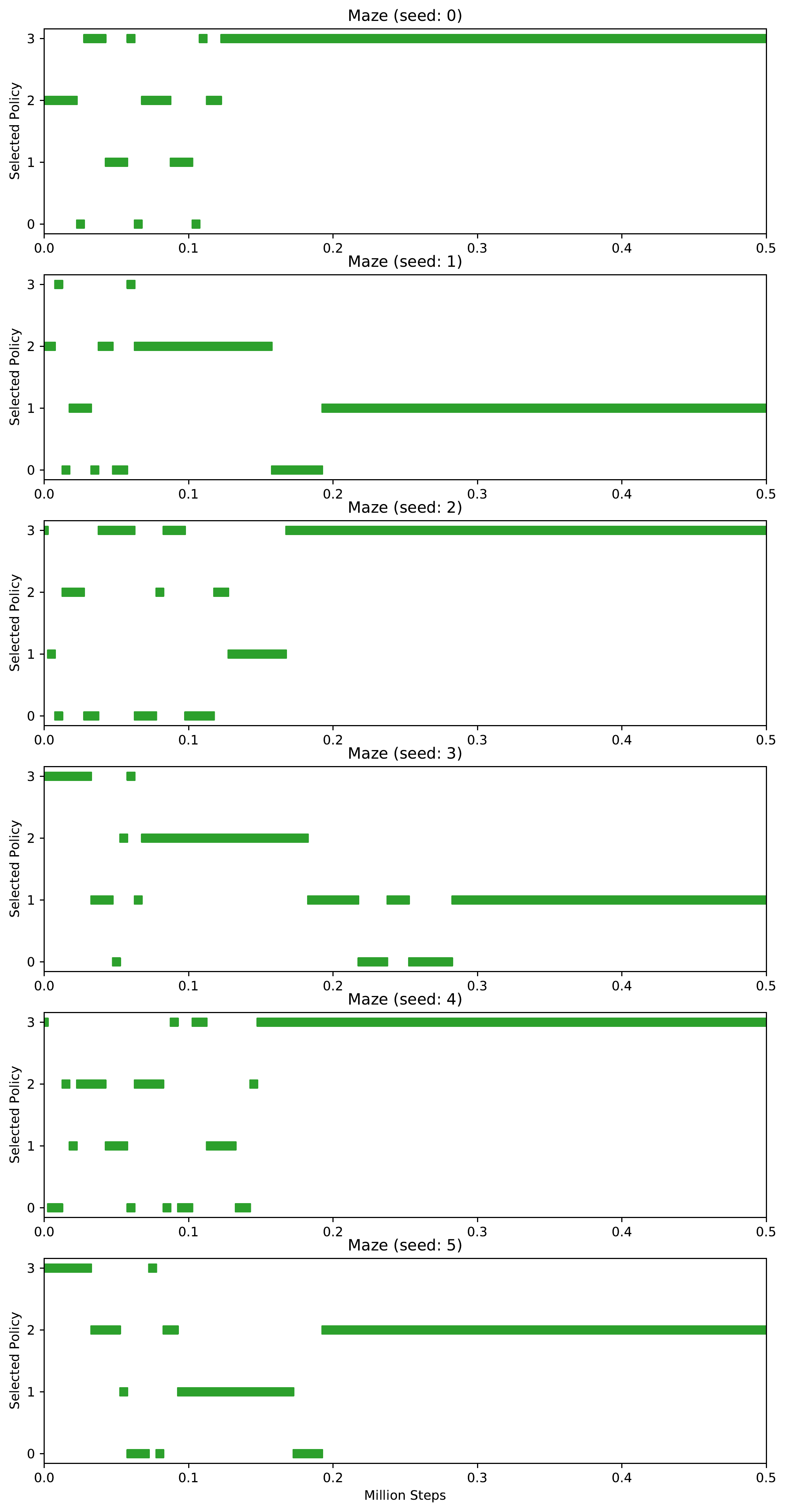}
    \caption{Visualization of the picked policies of MP-TRPO on Maze during the first 0.5M steps.}    
    \label{fig:vis}
\end{figure}

\section{Details of Experimental Setup}
\subsection{Environments}
The environments are all from OpenAI Gym \cite{brockman2016openai}, and the details of the sparse environments are summarized as follows: 
\begin{itemize}
\item {\sc SparseCartPoleSwingup}: a reward of $+1$ is given only when $\cos(\beta) > 0.8$, where $\beta$ is the pole angle, and $0$ otherwise
\item {\sc SparseDoublePendulum}: a reward of $+1$ is given only when the agent reaches the goal, i.e. swings the double pendulum upright, and $0$ otherwise 
\item {\sc SparseHalfCheetah}: the agent only receives a reward only when it runs multiple meters above the thereshold, and $0$ otherwise
\item {\sc SparseHopper}: the agent only receives a reward only when it hops multiple meters above the thereshold, and $0$ otherwise
\item {\sc SparseWalker2d}:  the agent only receives a reward only when it walks multiple meters above the thereshold, and $0$ otherwise
\end{itemize}

\subsection{Hyperparamters}
The hyper-parameters for MP-TRPO and TRPO, MP-PPO and PPO are shwon in Table 1 and Table 2 respectively, which are set to be the same for fair comparison according to \cite{henderson2018deep}.
For all algorithms, the policy network is (64, tanh, 64, tanh, linear), and the value network is (64, tanh, 64, tanh, linear).
The size of the policy buffer $K$ is set to be $8$ and $2$ in MP-TRPO and MP-PPO respectively, and the weight parameter $\alpha$ is set to be 0.1 in all environments.

\begin{table}[!h]
	\centering
	\caption{Hyper-parameters of MP-TRPO and TRPO.}
	\begin{tabular}{lc}
		\toprule
		{ Hyper-parameter} & { Value} \\
		\midrule
		Discount Factor $\gamma$ & 0.995 \\
		GAE $\lambda$ & 0.97 \\
		Batch Size & 5000 \\
		Iterations of Conjugate Gradient & 20 \\
		Damping of Conjugate Gradient & 0.1 \\
		Iterations of Value Function Update & 5 \\
		Batch Size of Value Function Update & 64 \\
		Step Size of Value Function Update & 0.001 \\
		Coefficient of Entropy & 0.0 \\
		max KL & 0.01 \\
		\bottomrule
	\end{tabular}
	\label{tab:hyper_trpo}
\end{table}

\begin{table}[!h]
	\centering
	\caption{Hyper-parameters of MP-PPO and PPO.}
	\begin{tabular}{lc}
		\toprule
		{ Hyper-parameter} & { Value} \\
		\midrule
		Discount Factor $\gamma$ & 0.995 \\
		GAE $\lambda$ & 0.97 \\
		Batch Size & 2048 \\
		Clip Parameter $\epsilon$ & 0.2 \\
		Epochs of Optimizer per Iteration & 10 \\
		Step Size of Optimizer & 0.0003 \\
		Batch Size of Optimizer & 64 \\
		Coefficient of Entropy & 0.0 \\
		\bottomrule
	\end{tabular}
	\label{tab:hyper_ppo}
\end{table}

\section{Memory Consumption}
The population of policies does not incur much more memory consumption, and comparison results for MP-TRPO and MP-PPO with varing number of paths $K$ on SparseDoublePendulum corresponding to our ablation experiments are shown in Table 3 and Table 4 respectively.

\begin{table}[!h]
\centering
\label{tab:mem_mptrpo}
\caption{Comparison results of memory consumption for MP-TRPO with different $K$.}\
\begin{tabular}{cccc}
\toprule
& GPU Memory & Memory \\
\midrule
TRPO & 359 M & 404 M \\
MP-TRPO ($K=2$) & 365 M & 410 M \\
MP-TRPO ($K=4$) & 365 M & 410 M\\
MP-TRPO ($K=8$) & 365 M & 410 M\\
MP-TRPO ($K=16$) & 365 M & 411 M\\
\bottomrule
\end{tabular}
\end{table}

\begin{table}[!h]
\centering
\label{tab:mem_mpppo}
\caption{Comparison results of memory consumption for MP-PPO with different $K$.}\
\begin{tabular}{cccc}
\toprule
& GPU Memory & Memory \\
\midrule
PPO & 335 M & 388 M \\
MP-PPO ($K=2$) & 351 M & 407 M \\
MP-PPO ($K=4$) & 351 M & 409 M\\
MP-PPO ($K=8$) & 351 M & 409 M\\
MP-PPO ($K=16$) & 335 M & 411 M\\
\bottomrule
\end{tabular}
\end{table}

\bibliographystyle{ACM-Reference-Format}  
\bibliography{mppo}  

\end{document}